\documentclass{article}

\usepackage[margin=1in]{geometry}

\usepackage{graphicx} 
\usepackage{subfigure} 


\usepackage[ruled]{algorithm2e} 
\usepackage{algorithmic}

\usepackage{amssymb,amsfonts,amsmath,amsthm,amscd,dsfont,mathrsfs}
\usepackage{graphicx,float,psfrag,epsfig}
\usepackage{wrapfig}
\usepackage{pgf,pgfarrows,pgfnodes}

\usepackage{hyperref}


\usepackage{url}
%


\usepackage{multirow}
\usepackage{colortbl}

\usepackage{amsmath}
\usepackage{amssymb}
\usepackage{graphicx}
\usepackage{bm}
\usepackage{dsfont}
\usepackage{paralist}
\usepackage{subfigure}
\usepackage{esvect}
\usepackage{cases}
\usepackage{cleveref}
\crefname{equation}{}{}
\crefrangeformat{equation}{\textnormal{(#3#1#4)--(#5#2#6)}}
\crefname{figure}{}{}
\crefrangeformat{figure}{\textnormal{(#3#1#4)-#5#2#6}}

\usepackage{enumitem}

\allowdisplaybreaks








\newcommand{\separator}{
  \begin{center}
    \rule{\columnwidth}{0.3mm}
  \end{center}
}






\newcommand{\beq}{\begin{eqnarray*}}
\newcommand{\eeq}{\end{eqnarray*}}
\newcommand{\beqn}{\begin{eqnarray}}
\newcommand{\eeqn}{\end{eqnarray}}
\newcommand{\bemn}{\begin{multiline}}
\newcommand{\eemn}{\end{multiline}}


\newcommand{\grad}[1]{\nabla #1}









\newcommand{\note}[1]{[\textcolor{red}{\textit{#1}}]}
\newcommand{\updated}[1]{\textcolor{black}{#1}}

\newcommand{\MF}{\sf}

\DeclareMathOperator{\EXP}{\mathbb{E}}

\newcommand{\minus}{\scalebox{0.4}[0.6]{$-$}}

\newtheorem{theorem}{Theorem}
\newtheorem{lemma}{Lemma}
\newtheorem{proposition}{Proposition}
\newtheorem{property}{Property}
\newtheorem{corollary}{Corollary}
\newtheorem{definition}{Definition}

\begin{document}
\title{\updated{Optimal Inference in Crowdsourced Classification\\via Belief Propagation}
  }
%
%
%

\author{Jungseul~Ok,~
Sewoong~Oh,~
Jinwoo~Shin,~
~and~Yung~Yi
\thanks{
Jungseul Ok, Jinwoo Shin, and Yung Yi are with
the Department of Electrical Engineering at Korea Advanced Institute of Science and Technology,
Daejeon, South Korea (e-mail: \{ockjs, jinwoos, yiyung\}@kaist.ac.kr).
Sewoong Oh is with 
the Department of Industrial and Enterprise Systems Engineering at
University of Illinois at Urbana-Champaign, Illinois, USA
(e-mail: swoh@illinois.edu).
}
  \thanks{This article is partially based on preliminary results published in the proceeding of the 33rd International Conference on Machine Learning (ICML 2016).}
}

\maketitle


\begin{abstract}
  Crowdsourcing systems are popular for solving large-scale labelling
  tasks with low-paid  workers.  We study the
  problem of recovering the true labels from the possibly erroneous
  crowdsourced labels under the popular Dawid-Skene model.
  To address this  inference problem, several algorithms have recently been proposed,
  but the best known guarantee 
  is still significantly larger than the fundamental limit.
We close this gap by 
introducing a tighter lower bound on the
fundamental limit and proving that Belief Propagation (BP) exactly matches this lower bound. 
The
  guaranteed optimality of BP is the strongest in the sense that it is
  information-theoretically impossible for any other algorithm to
  correctly label a larger fraction of the tasks.
Experimental results suggest  that  BP  is 
  close to optimal for all regimes considered and improves upon competing state-of-the-art algorithms.

\end{abstract}



%

\section{Introduction}

Crowdsourcing
platforms provide scalable 
human-powered solutions to labelling large-scale datasets at minimal
cost.  They are particularly popular in domains where the task is easy
for humans but hard for machines, e.g., computer vision and
natural language processing.  For example, the CAPTCHA system
\cite{captcha} uses a pair of scanned images of English words, one for
authenticating the user and the other for the purpose of getting
high-quality character recognitions to be used in digitizing books.
However, 
because the tasks are tedious and the pay is low, 
one of the major issues is the quality of the labels. 
Errors are common even among those who put in efforts. 
In real-world systems,  spammers are abundant, who submit random answers rather
than good-faith attempts to label. There are adversaries 
deliberately giving wrong answers.

A common and powerful strategy to improve
 reliability is to add redundancy: 
assigning each task to multiple workers and aggregating their answers by some algorithm such as 
majority voting. 
Although majority voting is widely used in practice, 
several novel approaches, which outperform majority voting, have been
recently proposed, e.g. \cite{smyth95,jin2002,whitehill09,Welinder10,RYZ10}.  The key
idea is to identify the good workers and give more weights to the
answers from those workers.  Although the ground truths may never be
exactly known, one can compare one worker's answers to those from
other workers on the same tasks, and infer how reliable or trustworthy
each worker is.

The standard probabilistic model for representing the noisy answers in
labelling tasks is the model introduced by Dawid and Skene in \cite{dawid1979}.  
Under this model, the core problem of interest is
how to aggregate the answers to maximize the accuracy of the estimated
labels.  
This is naturally posed as a statistical inference problem
that we call the {\em crowdsourced classification} problem.
Due to the combinatorial nature of the problem, the Maximum A Posteriori (MAP) estimate is optimal but computationally intractable. 
Several algorithms 
have recently been proposed as approximations, and their
performances are demonstrated only by numerical experiments. 
These include algorithms based on spectral methods \cite{GKM11,DDKR13,kos2011,KOS13SIGMETRICS,kos2014}, 
Belief Propagation (BP) \cite{liu2012}, 
 Expectation Maximization (EM) \cite{liu2012,zhang2014}, 
 maximum entropy \cite{ZPBM12,ZLPCS15},
 weighted majority voting \cite{LW89,LYZ13,LY14}, and 
 combinatorial approaches \cite{GZ13}. 

Despite  the algorithmic advances, theoretical advances have been relatively slow. 
Some upper bounds on the performances are known \cite{kos2011,zhang2014,GZ13}, 
but fall short of answering which algorithm should be used in practice. 
In this paper, we ask the fundamental question of whether it is possible to achieve the performance of 
the optimal MAP estimator with a computationally  efficient inference algorithm. 
In other words, we investigate the computational gap between what is information-theoretically possible and 
what is achievable with a polynomial  time algorithm. 

Our main result is that there is no computational gap in the
crowdsourced classification problem for a broad range of problem parameters. 
Under some mild assumptions on the parameters of the problem, we
show the following:
\begin{quote} 
\centering 
	\em Belief propagation is exactly optimal. 
\end{quote}
To the best of our knowledge, our algorithm is the only computationally efficient
approach that provably maximizes the fraction of correctly labeled
tasks, achieving exact optimality.

\vspace{0.05in}
\noindent {\bf Contribution.}
\updated{
We consider binary classification tasks and identify  regimes where the standard BP
achieves the performance of the optimal MAP estimator. 
When each task is assigned enough number of workers, 
we prove that it is impossible for any
other algorithm to correctly label a larger fraction of tasks than BP.
This is the only known algorithm to achieve such a strong notion of
optimality and settles the question of whether there is a
computational gap in the crowdsourced classification problem for a
broad range of parameters.  
}
\updated{ 
  We provide experimental results 
   confirming  the
  optimality of BP 
  for both synthetic and real datasets. 
}

The provable optimality of BP-based algorithms in graphical models
with loops (such as those in our model) is known only in a few
instances including community detection \cite{mossel2014}, error
correcting codes \cite{kudekar2013spatially} and combinatorial
optimization \cite{park2015max}.  Technically, our proof strategy for
the optimality of BP is similar to that in \cite{mossel2014} where
another variant of BP algorithm is proved to be optimal to recover the
latent community structure among users. However, our proof technique
overcomes several unique challenges, arising from the complicated
correlation among tasks that can only be represented by weighted and
directed hyper-edges, as opposed to simpler unweighted undirected
edges in the case of stochastic block models. This might be of
independence interest in analyzing censored block models
\cite{hajek2015exact} with some directed observations.

\vspace{0.05in}
\noindent {\bf Related work.}
\updated{
The crowdsourced classification problem
has been first studied in the {\em dense regime}, where all tasks are assigned all the workers 
\cite{GKM11,zhang2014}.
 In such dense regimes, 
 as the problem size increases, each task  receives increasing number of answers. 
 Thus, previous work has studied the probability of labelling all tasks correctly \cite{GKM11,zhang2014}. 
}
 
In this paper, we focus on the {\em sparse regime}, where each task is
assigned to a few workers.  Suppose $\ell$ workers are assigned each
task.  In practical crowdsourcing systems, a typical choice of $\ell$
is three or five.  For a fixed $\ell$, the probability of error now
does not decay with increasing dimension of the problem.  The
theoretical interest is focused on identifying how the error scales
with $\ell$, that represents how much redundancy should be introduced
in the system.  An upper bound that scales as $e^{-\Omega(\ell)}$
(when $\ell>\ell^*$ for some $\ell^*$ that depends on the problem
parameters) was proved by \cite{kos2011}, analyzing a spectral
algorithm that is modified to use the spectral properties of the
non-backtracking operators instead of the usual adjacency matrices.
This scaling order is also shown to be optimal by comparing it to the
error rate of an oracle estimator.  A similar bound was also proved
for another spectral approach, but under more restricted conditions in
\cite{DDKR13}.  Our main results provide an algorithm that (when
$\ell>C_r$ for some constant $C_r$ depending on $r$ where we denote
the number of tasks per worker by $r$) correctly labels the optimal
fraction of tasks, in the sense that it is information-theoretically
impossible to correctly label a larger fraction for any other
algorithms. 

These spectral approaches are popular due to simplicity, but
empirically do not perform as well as BP.  In fact, the authors in
\cite{liu2012} showed that the state-of-the-art spectral approach
proposed in \cite{kos2011} is a special case of BP with a specific
choice of the prior on the worker qualities. Since the algorithmic
prior might be in mismatch with the true prior, the spectral approach
is suboptimal.

\vspace{0.05in}
\noindent {\bf Organization.}
In Section \ref{sec:problem}, we provide necessary backgrounds
including the Dawid-Skene model for crowdsourced classification and
the BP algorithm.  Section \ref{sec:main} provides the main results of
this paper, and their proofs are presented in Section \ref{sec:proof}.
Our experimental results on the performance of BP are reported in
Section \ref{sec:exp} and we conclude in Section~\ref{sec:conclusion}.

\section{Preliminaries}
\label{sec:problem} 

We describe the mathematical model  and 
present the standard MAP and the  BP approaches. 

\subsection{
Crowdsourced Classification Problem}

We consider a set of $n$ binary tasks, denoted by $V$.  Each task
$i\in V$ is associated with a ground truth $s_i \in \{-1, +1\}$.
Without loss of generality, we assume $s_i$'s are independently chosen
with equal probability.
We let $W$ denote the set of workers who are assigned tasks to answer.
Hence, this task assignment is represented by as a bipartite graph $G
= (V, W, E)$, where edge $(i, u) \in E$ indicates that task $i$ is
assigned to worker $u$.
For notational simplicity, let $N_u : = \{i \in V : (i, u) \in E\}$
denote the set of tasks assigned to worker $u$ and conversely let
$M_i : = \{u \in W : (i, u) \in E\}$ denote the set of workers to
whom task $i$ is assigned.

When task $i$ is assigned to worker $u$, worker $u$ provides a binary
answer $A_{iu}\in \{-1,+1\}$, which is a noisy assessment of the true
label $s_i$.  Each worker $u$ is parameterized by a {\it reliability}
$p_{u} \in [0, 1]$, such that each of her answers is correct with
probability $p_{u}$. 
Namely, for given $p := \{p_u : u \in W\}$, the answers $A := \{A_{iu}
: (i, u) \in E\}$ are independent random variables such that
\begin{align*} 
A_{iu} = 
\begin{cases}
s_i &\text{with probability}\quad p_{u} \\
-s_i &\text{with probability~} 1-p_{u} \\
\end{cases}.
\end{align*}
We assume that the average reliability is greater than $1/2$, i.e.,
$\mu: = \EXP [2p_u - 1] > 0$.

This  Dawid-Skene model is the most popular one in crowdsourcing dating back to \cite{dawid1979}.
The underlying assumption  is that all the tasks share a
homogeneous difficulty;  
the error probability of a worker is consistent across all 
tasks. We assume that the reliability $p_u$'s are i.i.d. according to a  {\it reliability distribution} on $[0, 1]$,
described by a probability density function $\pi$.

For the theoretical analysis, we assume that the bipartite graph is
drawn uniformly over all $(\ell,r)$-regular graphs for some constants
$\ell,r$ using, for example, the configuration model
\cite{bollobas1998}.\footnote{We assume constants $\ell, r$ for
  simplicity, but our results
  hold as long as $\ell r=O(\log n)$.}  Each task is assigned to
$\ell$ random workers and each worker is assigned $r$ random tasks.
In real-world crowdsourcing systems, the designer gets to choose which
graph to use for task assignments. Random regular graphs have been
proven to achieve minimax optimal performance in \cite{kos2011}, and
empirically shown to have good performances. This is due to the fact
that the random graphs have large spectral gaps.

\subsection{MAP Estimator}

Under this crowdsourcing model with given assignment graph $G = (V, W,
E)$ and reliability distribution $\pi$, our goal is to design an
efficient estimator $\hat{s}(A) \in \{-1, +1\}^V$ of the unobserved
true answers $s:= \{ s_i : i \in V\}$ from the noisy answers $A$
reported by workers. In particular, we are interested in the optimal
estimator minimizing the (expected) average bit-wise {\it error rate},
i.e.,
\begin{align} \label{eq:optimization}
\underset{\hat{s}: \text{estimator}}{\text{minimize}} 
\quad P_{\MF err} (\hat{s}(A))
\end{align}
where we define
\begin{align*}
P_{\MF err} (\hat{s}) :=  
 \frac{1}{n} \sum_{i \in V} 
\Pr \left[ 
s_i \neq \hat{s}_i\left(A \right) 
\right].
\end{align*}
The probability is taken with respect to $s$ and $A$ for given
$G$ and $\pi$. From standard Bayesian arguments,   
the maximum a posteriori (MAP) estimator is an optimal solution of \eqref{eq:optimization}: 
\begin{align} 
\hat{s}^*_i(A) := \underset{s_i}{\text{arg} \max} \Pr [s_i   \mid  A ]. \label{eq:optimal-estimator}
\end{align}
However, this MAP estimate is 
challenging to compute, as we show below.  
Note that 
\begin{align}
\Pr[s, p  \mid  A]  %
 &\propto \Pr[p] \cdot \Pr[A \mid s, p] \cr 
&= \prod_{u \in W} \Pr [p_u] \prod_{i \in N_u} \Pr [A_{iu}  \mid  s_i, p_u ] \cr
& = \prod_{u \in W} \pi (p_u) \cdot  p_u^{c_u} (1-p_u)^{r_u - c_u} \label{eq:joint-probability}
\end{align}
where $r_u := |N_u|$ is the number of the tasks assigned to worker $u$
and $c_u := | \{i \in N_u : A_{iu} = s_i\}|$ is the number of the
correct answers from worker $u$. Then, 
\begin{align}
\Pr[s \mid A] &=  \int_{[0,1]^W}  \Pr[s, p \mid A]  d p  \cr %
& \propto    \prod_{u \in W} \underbrace{\int^1_0  \pi (p_u) \cdot  p_u^{c_u} (1-p_u)^{r_u - c_u}  d p_u}_{:= f_u \left(s_{N_u} \right)}
\label{eq:factor-form}
\end{align}
where  we let
$f_u \left(s_{N_u} \right) := \EXP [p_u^{c_u} (1-p_u)^{r_u - c_u}]$
denote the local factor associated with worker $u$. 
We note that the factorized form of the joint
probability of $s$ in \eqref{eq:factor-form} corresponds to a standard  graphical model 
with a  {\it factor  graph} $G = (V, W, E)$ that represents the joint probability of $s$
given $A$, where each task $i \in V$ and each worker $u \in W$
correspond to the random variable $s_i$ and the local factor $f_u$, respectively,
and the edges in $E$ indicate couplings among the variables and the
factors.

 The marginal probability $\Pr[s_i \mid A]$ in the optimal estimator
 $\hat{s}^*_i(A)$ is calculated by marginalizing out $s_{\minus i} :=
 \{s_j : i \neq j \in V \}$ from \eqref{eq:factor-form}, i.e.,
\begin{align}
\Pr [s_i  \mid  A]  %
&= \sum_{s_{\minus i} \in \{\pm1\}^{V\setminus i}} \Pr [s  \mid  A] \cr
&\propto \sum_{s_{\minus i}} \prod_{u \in W} f_u \left(s_{N_u} \right).
\label{eq:marginal-probability}
\end{align}
We note that the summation in \eqref{eq:marginal-probability} is taken
over exponentially many $s_{\minus i} \in \{-1, +1\}^{n-1}$ with
respect to $n$. Thus in general, the optimal estimator $\hat{s}^*$,
which requires to obtain the marginal probability of $s_i$ given $A$
in \eqref{eq:optimal-estimator}, is {\it computationally intractable}
due to the exponential complexity in \eqref{eq:marginal-probability}.


\subsection{Belief Propagation}
Recalling the factor graph described by \eqref{eq:factor-form}, the
computational intractability in \eqref{eq:marginal-probability}
motivates us to use a standard sum-product belief propagation (BP)
algorithm on the factor graph as a heuristic method for approximating
the marginalization. 
The BP
algorithm is described by the following iterative update of messages
$m_{i \to u}$ and $m_{u \to i}$ between task $i$ and worker $u$ and
belief $b_i$ on each task $i$:
\begin{align}
m^{t+1}_{i \to u} (s_i) & \propto \prod_{v \in M_{i} \setminus \{u\}} 
m^t_{v \to i} (s_i) \label{eq:iu-message}\;,\\
m^{t+1}_{u \to i} (s_i) & \propto \sum_{s_{N_u \setminus \{i\}}} f_u (s_{N_u}) \prod_{j \in {N_{u} \setminus \{i\}}} 
m^{t+1}_{j \to u} (s_j) \label{eq:ui-message}\;,\\
b^{t+1}_{i} (s_i ) & \propto \prod_{u \in M_i} m^{t+1}_{u \to i} (s_i) \;,
\label{eq:belief}
\end{align}
where the belief $b_i(s_i)$ is the estimated marginal probability of
$s_i$ given $A$. We here initialize messages with a trivial constant
$\frac{1}{2}$ and normalize messages and beliefs, i.e., $ \sum_{s_i}
m_{i \to u} (s_i) = \sum_{s_i} m_{u \to i} (s_i) = \sum_{s_i} b_i
(s_i) = 1$. Then at the end of $k$ iterations, we estimates the label
of task $i$ as follows:
\begin{align} \label{eq:bp-estimator}
\hat{s}^{{\MF BP}(k)}_i = \underset{s_i}{\arg \max} ~ b^{k}_i(s_i).
\end{align}
We note that if 
the factor graph is a tree, then it is known that the belief
converges, and computes the exact marginal probability
\cite{pearl1982}. 
\begin{property} \label{prop:tree-bp}
  If assignment graph $G$ is a tree so that the corresponding factor
  graph is a tree as well, then 
\begin{align*}
b^{t}_{i} (s_i) = \Pr[s_i \mid A] \quad \text{for all $t \ge n$} 
\end{align*}
where $b^{t}_{i} (s_i)$ is iteratively updated by BP in
\cref{eq:iu-message,eq:ui-message,eq:belief}.
\end{property}
However, for general graphs which may have loops, e.g., random $(\ell,
r)$-regular graphs, BP has no performance guarantee, i.e., BP may
output $b_i (s_i) \neq \Pr[s_i \mid A]$. Further the convergence of BP
is not guaranteed, i.e., the value of $\lim_{t \to \infty} b^t_i (s_i)$
may not exist.

\section{Performance Guarantees of BP}\label{sec:main}

In this section, we provide the theoretical guarantees on the performance of BP.
To this end, we consider the output of BP in \eqref{eq:bp-estimator}
with a choice of $k = \log \log n$. It follows that  the overall
complexity of BP is bounded by $O(n {\ell} {r} \log {r} \cdot \log
\log n)$ as each iteration of BP requires $O(n {\ell} {r} \log
{r})$ operations \cite{liu2012}.

\subsection{Exact Optimality of BP for large $\ell$}
We show in the following that BP is asymptotically optimal 
under a mild assumption that each task is assigned to sufficiently large 
(but constant with respect to the number of tasks) 
number of workers, i.e., $\ell > C_{r, \pi}$. 
This follows from a tighter bound in the non-asymptotic regime, where 
we upper bound the optimality gap, exponentially vanishing in the number of iterations $k$.  We present both results in the following theorem.
\begin{theorem}\label{thm:optimality}
Consider the Dawid-Skene model under the task assignment generated by
a random bipartite $(\ell, r)$-regular graph $G$ consisting of $n$
tasks and $(\ell/r)n$ workers. Let
$\hat{s}^{\MF BP(k)}$ denote the output of BP in
    \eqref{eq:bp-estimator} after $k$ iterations.
 For $\mu := \EXP [2p_u - 1] > 0$, $\EXP[(2p_u - 1)^2] < 1$, and $k \le \log \log n$, 
there exists a constant $C_{r, \pi}$ that only depends on
$\mu$ and $r$ such that if 
$\ell \geq C_{r, \pi}$,
then  for sufficiently large $n$:
	\begin{align}
	\EXP \left[ \min_{\hat{s}: \text{estimator}} P_{\MF err} (\hat{s}) - P_{\MF err} (\hat{s}^{\MF BP(k)})  
 \right]  \; \le \; 2^{-k+1} \;
	\end{align}
		where the expectation
    is taken with respect to the graph $G$. 
\end{theorem}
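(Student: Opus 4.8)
The plan is to follow the standard route for proving optimality of BP on locally tree-like graphical models. The belief $b^k_\rho$ after $k$ rounds of BP depends only on an $O(k)$-neighbourhood of the root task $\rho$; this neighbourhood is a tree with high probability; BP is exact on trees (Property~\ref{prop:tree-bp}); and the resulting estimate is compared with a genie-aided estimator whose extra information --- the true labels on the boundary of the neighbourhood --- turns out to be asymptotically worthless. Since $P_{\MF err}$ averages the per-task error, the MAP estimator $\hat s^{*}$ of \eqref{eq:optimal-estimator} attains $\min_{\hat s}P_{\MF err}(\hat s)$, and all tasks are exchangeable in the random $(\ell,r)$-regular model, it suffices to bound, for one fixed root task $\rho$,
\begin{align*}
\EXP_G\!\left[\,\Pr\!\big[\hat s^{\MF BP(k)}_\rho\neq s_\rho\big]-\Pr\!\big[\hat s^{*}_\rho\neq s_\rho\big]\,\right].
\end{align*}
Let $\mathcal T_\rho$ denote the radius-$(k+1)$ ball around $\rho$ in the factor graph and $\partial\mathcal T_\rho$ the set of tasks at maximal distance from $\rho$ in $\mathcal T_\rho$. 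A first-moment computation in the configuration model gives $\Pr_G[\mathcal T_\rho\text{ is not a tree}]=O\!\big((\ell r)^{O(k)}/n\big)$; since $k\le\log\log n$ forces $(\ell r)^{O(k)}=\mathrm{polylog}(n)$, this is at most $2^{-k}$ once $n$ is large. On the complementary event a standard finite-horizon strengthening of Property~\ref{prop:tree-bp}, with leaf messages initialised to the uniform prior as in \cref{eq:iu-message,eq:ui-message,eq:belief}, gives $b^k_\rho(\cdot)=\Pr[s_\rho\mid A_{\mathcal T_\rho}]$, so $\hat s^{\MF BP(k)}_\rho=\arg\max_{s_\rho}\Pr[s_\rho\mid A_{\mathcal T_\rho}]$ is the Bayes rule for $s_\rho$ from the answers inside $\mathcal T_\rho$ alone.

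Introduce the oracle estimator $\hat s^{\mathrm{or}}_\rho:=\arg\max_{s_\rho}\Pr[s_\rho\mid A_{\mathcal T_\rho},\,s_{\partial\mathcal T_\rho}]$, the Bayes rule for a strictly larger information set, so $\EXP_G[\Pr[\hat s^{\mathrm{or}}_\rho\neq s_\rho]]\le\EXP_G[\Pr[\hat s^{*}_\rho\neq s_\rho]]$ and it suffices to control the BP-versus-oracle gap on the tree event. Encode the two posteriors by their log-likelihood ratios: put $X_\rho:=\log\frac{\Pr[s_\rho=+1\mid A_{\mathcal T_\rho}]}{\Pr[s_\rho=-1\mid A_{\mathcal T_\rho}]}$ and let $X_\rho+Z_\rho$ be the same log-ratio with $s_{\partial\mathcal T_\rho}$ adjoined to the conditioning, so on the tree event the per-task errors are $\EXP[g(X_\rho)]$ and $\EXP[g(X_\rho+Z_\rho)]$ where $g(x):=(1+e^{|x|})^{-1}$ is $\tfrac14$-Lipschitz. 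Since $|X_\rho|$ and $|X_\rho+Z_\rho|$ differ by at most $|Z_\rho|$ (reverse triangle inequality),
\begin{align*}
0\ \le\ \Pr\!\big[\hat s^{\MF BP(k)}_\rho\neq s_\rho\big]-\Pr\!\big[\hat s^{\mathrm{or}}_\rho\neq s_\rho\big]\ =\ \EXP\!\big[g(X_\rho)-g(X_\rho+Z_\rho)\big]\ \le\ \tfrac14\,\EXP\,|Z_\rho|,
\end{align*}
the left-hand inequality holding because a Bayes rule for a larger information set cannot have a larger error. Unfolding \cref{eq:iu-message,eq:ui-message} on $\mathcal T_\rho$, $X_\rho$ is a sum over the $\ell$ workers $u\in M_\rho$ of monotone worker-to-task log-likelihood ratios built from the moment factors $f_u$ of \eqref{eq:factor-form}, recursing down the tree, and $Z_\rho$ is the change produced by replacing, at each $j\in\partial\mathcal T_\rho$, the uniform leaf message with a point mass at $s_j$ and propagating upward. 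Thus everything reduces to a correlation-decay bound of the form $\EXP|Z_\rho|\le 2^{-k+2}$.

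\textbf{This correlation-decay bound is the main obstacle}, and it is precisely where the hypothesis $\ell\ge C_{r,\pi}$ is used. I would prove it by a one-layer contraction along $\mathcal T_\rho$: conditioned on the interior answers, the amount by which the boundary labels can move the root log-likelihood ratio shrinks by a factor at most $\tfrac12$ each time it is passed up through one task--worker--task layer, while it is $O(1)$ at the boundary, so iterating $k$ layers produces the bound. The heuristic is that, when $\ell$ is large, every internal task of $\mathcal T_\rho$ is already sharply identified by its own $\ell$ incoming worker messages, so whenever a boundary perturbation crosses a worker factor $f_u$, the sensitivity of $f_u$ to the (already confident) incoming task messages is small --- small enough, once $\ell\ge C_{r,\pi}$, to overcome the $(\ell-1)(r-1)$-fold branching of the tree and give net contraction; the assumptions $\mu=\EXP[2p_u-1]>0$ and $\EXP[(2p_u-1)^2]<1$ enter here to keep the worker-to-task messages on average informative and non-degenerate, so that this self-identification occurs. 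The genuine difficulty, relative to the stochastic block model treated by analogous methods, is that a worker is a \emph{weighted, directed hyper-edge}: $f_u$ couples all $|N_u|$ of $u$'s tasks through a non-linear mixed moment, and the induced task-to-task influence is asymmetric and answer-dependent, so one cannot quote a symmetric broadcasting-on-trees estimate. I would therefore establish the contraction by directly bounding the derivative of a worker-to-task log-likelihood ratio with respect to each incoming task log-likelihood ratio and with respect to a boundary flip, uniformly over the reliability distribution $\pi$, and then running a second-moment/coupling recursion along $\mathcal T_\rho$; extracting the clean factor $\tfrac12$, and verifying that $C_{r,\pi}$ can be taken to depend only on $r$ and $\mu$, is the technical heart. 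Granting $\EXP|Z_\rho|\le 2^{-k+2}$, the per-task gap on the tree event is at most $\tfrac14\cdot 2^{-k+2}=2^{-k}$; adding the $\le 2^{-k}$ contribution of the (rare) non-tree event and averaging over $G$ and over the root task gives the claimed bound $2^{-k+1}$.
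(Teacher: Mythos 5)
Your proposal reproduces the architecture of the paper's own proof almost exactly: reduce to a single root by exchangeability, condition on the radius-$2k$ neighborhood being a tree (with failure probability $O((\ell r)^{O(k)}/n)\le 2^{-k}$ for $k\le\log\log n$), use exactness of BP on trees to identify $\hat s^{\MF BP(k)}_\rho$ with the Bayes rule given $A_{\rho,2k}$, sandwich the MAP error between the tree-restricted Bayes rule and a genie that is additionally given the boundary labels $s_{\partial V_{\rho,2k}}$ (the paper's Lemmas~\ref{lem:add} and \ref{lem:loss}, proved by a data-processing/total-variation argument), and finally show the BP-versus-genie gap decays like $2^{-k}$. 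Your reformulation in terms of log-likelihood ratios and the $\tfrac14$-Lipschitz map $g(x)=(1+e^{|x|})^{-1}$ is an equivalent packaging of the paper's magnetizations $X_\rho$ and $Y_\rho$ and of the identity $\Delta=\tfrac12\EXP|X|$; that part is fine.

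The genuine gap is that the correlation-decay bound $\EXP|Z_\rho|\lesssim 2^{-k}$ is asserted as a plan rather than proved, and this is precisely the step that carries all of the theorem's content: it is the only place where $\mu>0$, $\EXP[(2p_u-1)^2]<1$, and $\ell\ge C_{r,\pi}$ are used. Your stated heuristic --- ``the influence shrinks by a factor at most $\tfrac12$ each time it is passed up through one task--worker--task layer'' --- is the conclusion of the paper's Lemma~\ref{lem:correlation-decay}, not a per-coordinate fact one can read off a derivative bound; a single partial derivative of the recursion $X_i=h_i(X_{\partial^2 i})$ is only bounded by a constant $\eta$ times $\prod_{u'\neq u}\sqrt{g^-_{iu'}/g^+_{iu'}}$, and the branching factor $(\ell-1)(r-1)$ multiplies the sum of these. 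The net contraction is obtained only because each of the $\ell-1$ factors in that product has conditional expectation at most $\sqrt{1-\mu^2/4}<1$ (the paper's Lemma~\ref{lem:one}), which in turn requires (i) showing via majority voting, Hoeffding, and Markov that the incoming magnetizations $X_j,Y_j$ concentrate near $+1$ when $\ell$ is large, and (ii) a Cauchy--Schwarz computation at the concentration point giving $\Gamma(1)\le\sqrt{1-\mu^2}$; only then can $C_{r,\pi}$ be chosen so that $\eta\,(1-\varepsilon_{\pi,r})^{C_{r,\pi}}\le 1/(2(\ell-1)(r-1))$. None of this appears in your proposal, so as written it establishes the reduction but not the theorem. (You also omit the trivial $r=1$ case, which the paper treats separately since there BP coincides with the exact MAP on disjoint stars.)
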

	As a corollary, it follows that  when we set $k$ increasing with $n$, for example $ k = \log \log n$, we have asymptotic optimality: 
	\begin{eqnarray}
		\lim_{n \to \infty}     \EXP \left[ \min_{\hat{s}: \text{estimator}} P_{\MF err} (\hat{s}) - P_{\MF err} (\hat{s}^{\MF BP})  
 \right] \; =\;  0\;. 
	\end{eqnarray}
A proof is provided in Section~\ref{sec:optimality-thm-pf}. 
Our analysis compares BP to  an {\it oracle} estimator. 
This oracle estimator not only has access to the observed crowdsourced labels, 
but also the ground truths of a subset of tasks. 
Given this extra information, it performs the optimal estimation,
outperforming any algorithm that operates only on the observations.
Using the fact that the random $(\ell, r)$-regular bipartite graph has
a {\it locally tree-like structure} \cite{bollobas1998} and BP is
exact on the local tree \cite{pearl1982}, we prove that the
performance gap between BP and the oracle estimator
vanishes 
due to {\it decaying correlation} from the information on the outside
of the local tree to the root.
This establishes that the gap between BP and the best estimator vanishes, 
in the large system limit.

The assumption on $\mu$ is mild, since it only requires that the crowd
as a whole can distinguish what the true label is. In the case
$\mu<0$, one can flip the sign of the final estimate to achieve the
same guarantee.  
It is more intuitive to understand this assumption as 
formally defining a ground truths, as what the majority crowd would agree on (on average) if we asked the same question to 
all the workers in the crowd. 
Hence, this assumption is without loss of generality. 

The assumption on $\EXP[(2p_u - 1)^2]<1$ is mild, as 
the only case when $\EXP[(2p_u - 1)^2] < 1=1$ is if 
$p_u$ is a a binary random variable taking values only in $\{0,1\}$. 
In such cases, every worker is either telling the exact truths consistently 
or exact the opposite of the truths. 
It follows from Perron-Frobenius theorem \cite{keshavan2009matrix} that 
a naive spectral method would work (and so does several other simple techniques). 
However, BP messages are not smooth in this case, 
which is required for our analysis. 
We believe optimality of BP still holds but requires a  different analysis technique. 

Although practically, BP works well in all regimes of parameters as suggested in Section \ref{sec:exp}, 
theoretically, we require 
 require $k = O(\log \log n)$ to ensure that 
 the graph is locally tree-like  within the neighborhood of depth $k$. 
 Analysis of BP beyond $k=O(\log \log n)$ is an open problem, 
 also in other applications such as community detection \cite{mossel2014}.

When $r=1$, there is nothing to learn about the workers and simple
majority voting is also the optimal estimator.  BP also reduces to
majority voting in this case, achieving the same optimality, and in
fact $C_{1,\pi}=1$.  The interesting non-trivial case is when
$r \ge 2$.  The sufficient condition is for $\ell$ to be larger than
some $C_{\mu,r}$.  Although experimental results in Section
\ref{sec:exp} suggest that BP is optimal in all regimes considered,
proving optimality for $\ell<C_{r,\pi}$ requires new analysis
techniques, beyond those we develop in this paper. The problem of
analyzing BP for $\ell<C_{r,\pi}$ (sample sparse regime) is
challenging. Similar challenges have not been resolved even in a
simpler models\footnote{The stochastic block model is simpler
    than our model in the sense that it has only pair-wise factors  
    which is the special case of our model with $r=2$.} 
    of stochastic
block models, where BP and other efficient inference algorithms have
been analyzed extensively \cite{mossel2014,bordenave2015non}.

\subsection{Relative Dominance of BP for small $\ell$}

For general $\ell$ and $r$, we establish the dominance of BP over two 
existing algorithms with known guarantees: 
the majority voting (MV) and the state-of-the-art iterative algorithm (KOS) in
\cite{kos2011}.
In the sparse regime, where $\ell r=O(\log n)$, these are the only existing algorithms with tight provable guarantees. 
\begin{theorem}\label{thm:superiority}
Consider the Dawid-Skene model under the task assignment generated by
a random bipartite $(\ell, r)$-regular graph $G$ consisting of $n$
tasks and $(\ell/r)n$ workers.
Let $\hat{s}^{\MF MV}$ and $\hat{s}^{\MF KOS}$ denote the outputs of MV and KOS algorithms, respectively.
Then, for any $\ell, r\geq 1$ such that $\ell r = O(\log n)$,
\begin{align*}
\lim_{n \to \infty} \EXP \left[ P_{\MF err} (\hat{s}^{\MF BP}) \right]  ~\le~
 \min \left\{
 \lim_{n \to \infty} \EXP \left[ P_{\MF err} (\hat{s}^{\MF MV}) \right], 
\lim_{n \to \infty}   \EXP \left[ P_{\MF err} (\hat{s}^{\MF KOS}) \right]  \right\} 
\end{align*}
where $\hat{s}^{\MF BP}$ is the output of BP in
\eqref{eq:bp-estimator} with $k = \log \log n$ and the expectations
are taken with respect to the graph $G$.
\end{theorem}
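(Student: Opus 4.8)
The plan is to exploit the tree-like structure of random $(\ell,r)$-regular bipartite graphs and the fact that, on a tree, BP computes the exact posterior marginal (Property~\ref{prop:tree-bp}), which in turn is the Bayes-optimal estimator among all rules using the same local data. First I would fix a task $\rho$ and, for $k=\log\log n$, consider the depth-$2k$ neighborhood $G_{\rho,2k}$ of $\rho$ in the factor graph. By a standard first-moment argument on the configuration model (as in \cite{bollobas1998,kos2011}), $\Pr[G_{\rho,2k}\text{ is not a tree}] = O\big((\ell r)^{2k}/n\big) = o(1)$ when $\ell r = O(\log n)$ and $k=\log\log n$, so asymptotically the neighborhood is a tree with probability tending to $1$. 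On this event, $b^{k}_\rho$ output by BP equals the true marginal $\Pr[s_\rho \mid A_{\rho}]$, where $A_\rho$ denotes the answers on edges inside $G_{\rho,2k}$; hence $\hat s^{\MF BP}_\rho$ coincides with the MAP estimate $\hat s^*_\rho(A_\rho)$ based on that local data.

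The key observation is that both the MV estimate $\hat s^{\MF MV}_\rho$ and the KOS estimate $\hat s^{\MF KOS}_\rho$ are themselves (deterministic, possibly randomized) functions of the very same local data $A_\rho$: MV uses only the $\ell$ answers on edges incident to $\rho$, and KOS after $k$ iterations uses only information within $2k$ hops of $\rho$ (this is precisely why \cite{kos2011} analyzes it via the local tree). Therefore, on the tree event, the Bayes-optimality of the posterior-thresholding rule gives
\begin{align*}
\Pr[s_\rho \neq \hat s^{\MF BP}_\rho \mid G_{\rho,2k}\text{ is a tree}] \;\le\; \min\big\{ \Pr[s_\rho \neq \hat s^{\MF MV}_\rho \mid \cdot\,], \; \Pr[s_\rho \neq \hat s^{\MF KOS}_\rho \mid \cdot\,]\big\},
\end{align*}
since no function of $A_\rho$ can beat the MAP estimator built from $A_\rho$. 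Averaging over $\rho$, adding the $o(1)$ contribution from the non-tree event (probabilities are bounded by $1$), and taking $n\to\infty$ then yields the claimed inequality between the limiting error rates; here I would also invoke the existence of these limits, which follows from the convergence results already established for MV and KOS in \cite{kos2011} (and for BP from Theorem~\ref{thm:optimality}'s machinery, i.e. the local weak limit of the rooted tree).

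The main obstacle, and the step needing the most care, is justifying rigorously that $\hat s^{\MF KOS}_\rho$ is measurable with respect to $A_\rho$ — i.e. that $k=\log\log n$ iterations of the KOS message-passing updates propagate information only within $2k$ hops of $\rho$, and that the version of KOS being compared against is exactly the one with this iteration count. This is morally true by construction of message-passing, but one must handle the non-backtracking initialization/normalization of \cite{kos2011} and confirm the updates do not implicitly use global quantities (e.g. a global spectral normalization). A secondary subtlety is that MV and KOS may break ties arbitrarily, so strictly one compares BP against the \emph{best} tie-breaking rule, which only strengthens the inequality. Finally, one should confirm uniformity: the $o(1)$ non-tree bound must hold after averaging over all $n$ tasks, which again is standard since the bound on $\Pr[G_{\rho,2k}\text{ not a tree}]$ is uniform in $\rho$.
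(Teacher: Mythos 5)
Your proposal is correct and follows essentially the same route as the paper: identify that on the (high-probability) event that $G_{\rho,2k}$ is a tree, BP computes the exact posterior given $A_{\rho,2k}$ and is therefore Bayes-optimal among all estimators measurable with respect to $A_{\rho,2k}$, which includes both MV (one-hop data) and KOS after $k$ iterations ($2k$-hop data); then absorb the non-tree event via the bound $\frac{3\ell r}{n}((\ell-1)(r-1))^{2k}=o(1)$ for $k=\log\log n$ and $\ell r=O(\log n)$. The measurability of $\hat{s}^{\MF KOS,k}_\rho$ with respect to $A_{\rho,2k}$, which you rightly flag as the point needing care, is exactly what the paper asserts as the opening step of its proof.
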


A proof of the above theorem is presented in
Section~\ref{sec:superiority-thm-pf}.  Using
Theorem~\ref{thm:superiority} and the known error rates of MV and KOS
algorithms in \cite{kos2011}, one can derive the following upper bound
on the error rate of BP:
\begin{align}
\lim_{n \to \infty}  \EXP \left[ P_{\MF err} (\hat{s}^{\MF BP}) \right] 
~
\leq
~
\min \Bigg\{  \lim_{n\to \infty} e^{ - \Big(\tfrac{\ell \mu^2}{2} \Big) }, 
\lim_{n\to \infty} e^{ - \Big(  \tfrac{\ell q}{2} \cdot \tfrac{q^2 (\ell-1)(r-1) - 1}{3q^2 (\ell-1)(r-1)+   q(\ell -1)}  \Big)}
\Bigg\}
\label{eq:errorexponent}
\end{align}
where 
$q := \EXP \left[(2p_u -1)^2 \right]$ and all the parameters $\ell,r,\mu$, and $q$ 
can depend on $n$.

This is particularly interesting, since it has been observed
empirically and conjectured with some non-rigorous analysis in
\cite{kos2014} that there exists a threshold $ (\ell-1) (r-1) =
1/q^2$, above which KOS dominates over MV, and below which MV
dominates over KOS (see Figure~\ref{fig:all}).  This is due to the fact
that KOS is inherently a spectral algorithm relying on the singular
vectors of a particular matrix derived from $A$.  Below the threshold,
the sample noise overwhelms the signal in the spectrum of the matrix,
which is known as the spectral barrier, and spectral methods fail.
However, in practice, it is not clear which of the two algorithms
should be used, since the threshold depends on latent parameters of
the problem.  Our dominance result shows that one can safely use BP,
since it outperforms both algorithms in both regimes governed by the
threshold.  This is further confirmed by numerical experiments in
Figure \ref{fig:all}.

\section{Proofs of Theorems}\label{sec:proof}
In this section, we provide the proofs of
Theorems~\ref{thm:optimality}~and~\ref{thm:superiority}. 

\subsection{Proof of Theorem~\ref{thm:optimality}}
\label{sec:optimality-thm-pf}

We first consider the case $r = 1$. Then, $G$ is the set of disjoint
{\it one-level} trees, i.e., star graphs, where the root of each tree
corresponds to task $\rho \in V$ and the leaves are the set $M_\rho$
of workers assigned to the task $\rho$.  Since the graphs are
disjoint, we have $\Pr[s_\rho | A] = \Pr[s_\rho | A_{\rho, 1}]$, where
$A = \{A_{iu} : (i,u)\in E \}$ and $A_{\rho, 1} = \{A_{\rho u} : u \in
M_\rho\}$.  From Property~\ref{prop:tree-bp}, it follows that
\begin{align*}
\hat{s}^{\MF BP}_\rho 
= \underset{s_\rho}{\arg \max} \Pr[s_\rho \mid A_{\rho, 1}]  = \hat{s}^*_\rho(A_{\rho, 1}).
\end{align*}
Therefore, for any $\ell \ge 1$, the optimal MAP estimator
$\hat{s}^*_\rho(A)$ in \eqref{eq:optimal-estimator} is identical
to 
the output $\hat{s}^{\MF BP}_\rho$ with any $k \ge 1$.

From now on, we focus on the case $r \ge 2$, and we condition on a
fixed task assignment graph $G$. 
Define $\rho\in V$ as a random node chosen uniformly at random and let
$\Delta(\hat{s}_\rho)$ denote the gain of estimator
$\hat{s}_\rho$ 
compared to random guessing, i.e.,
\begin{align*} 
\Delta (\hat{s}_\rho) := 
\frac{1}{2} -  \Pr[s_\rho \neq \hat{s}_\rho ]  \text{~and~}
P_{\MF err} (\hat{s}) = \frac{1}{2} -    \Delta (\hat{s}_\rho) 
\end{align*}
where the expectation is taken with respect to the distribution of
$G$. Then it is enough to show that $\Delta(\hat{s}^*_\rho (A))$ and
$\Delta(\hat{s}^{\MF BP}_\rho)$ converge to the same value, i.e., the
limit value of $\lim_{n\to \infty} \EXP[ \Delta(\hat{s}^*_\rho
(A))]$ exists and as $n \to \infty$,
\begin{align} \label{eq:wts}
\EXP \left[ \Delta(\hat{s}^*_\rho (A)) - \Delta(\hat{s}^{\MF BP}_\rho)  \right]  \to 0
\end{align}
where the expectation is taken with respect to the distribution of
$G$.

To this end, we introduce two estimators, $\hat{z}_\rho^*(A_{\rho,
  2k})$ and $\hat{s}_\rho^*(A_{\rho, 2k})$, which have accesses to
different amounts and types of information. 
Let $G_{\rho, 2k} =(V_{\rho, 2k}, W_{\rho, 2k}, E_{\rho, 2k}) $ denote
the subgraph of $G$ induced by all the nodes within (graph) distance $2k$ from
{\it root} $\rho$ and $\partial V_{\rho, 2k}$ denote the set of (task)
nodes\footnote{Since $G$ is a bipartite graph, the distance from task
  $\rho$ to every task is even and the distance from task $\rho$ to every
  worker is odd.}  whose distance from $\rho$ is exactly $2k$. We
now define the following {\it oracle} estimator: 
\begin{align*}
\hat{z}_\rho^*(A_{\rho, 2k}) &:= \underset{s_\rho}{ \arg \max} \Pr [s_i \mid A_{\rho, 2k}, s_{\partial V_{\rho, 2k}} ]
\end{align*}
where we denote
\begin{align} \label{eq:k-hop-info}
A_{\rho,2k} : = \{A_{iu} : (i, u) \in E_{\rho, 2k}\}.
\end{align}

We note that $\hat{z}_\rho^*(A_{\rho, 2k})$ uses the exact label
information of 
$\partial V_{\rho, 2k}$ separating the inside and the outside of
$G_{\rho, 2k}$. Hence one can show that $\hat{z}_\rho^{*}(A_{\rho,
  2k})$ outperforms
the optimal estimator $\hat{s}^*_\rho (A)$. We formally provide the
following lemma whose proof is given in 
Section~\ref{sec:add-lem-pf}.
\begin{lemma} \label{lem:add} Consider the Dawid-Skene model with the
  task assignment corresponding to $G = (V, W, E)$ and let $A$ denote 
  the set of workers' labels. For $\rho \in V$ and $k \ge 1$,
 \begin{align*}
  \Delta (\hat{z}^*_\rho (A_{\rho, 2k})) ~\ge~ \Delta (\hat{z}^*_\rho (A_{\rho, 2k+2})) ~ \dots~\ge~   \Delta (\hat{s}^*_\rho (A)).
 \end{align*}
\end{lemma}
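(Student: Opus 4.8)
The plan is to prove the chain of inequalities in \cref{lem:add} by a data-processing / information-monotonicity argument, exploiting the bipartite structure so that $\partial V_{\rho,2k}$ is a separating set in the factor graph. The key observation is that conditioned on the true labels $s_{\partial V_{\rho,2k}}$, the answers $A_{\rho,2k}$ inside the ball of radius $2k$ and the answers $A \setminus A_{\rho,2k}$ outside become conditionally independent, because every worker factor $f_u$ couples only the tasks in a single $N_u$, and every path in $G$ from a node outside $G_{\rho,2k}$ to $\rho$ must cross a task node at distance exactly $2k$. Thus $s_\rho \,-\, (A_{\rho,2k}, s_{\partial V_{\rho,2k}}) \,-\, A$ is \emph{not} quite a Markov chain, but $s_\rho \,-\, (A_{\rho,2k}, s_{\partial V_{\rho,2k}}) \,-\, (A, s_{\partial V_{\rho,2k}})$ is: given the separating labels, the extra answers outside carry no further information about $s_\rho$. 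Hence the oracle $\hat z^*_\rho(A_{\rho,2k})$, which is the MAP estimator based on $(A_{\rho,2k}, s_{\partial V_{\rho,2k}})$, is also the MAP estimator based on the strictly larger information set $(A, s_{\partial V_{\rho,2k}})$, and therefore dominates the MAP estimator $\hat s^*_\rho(A)$ based on $A$ alone.

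Concretely, I would first record the elementary fact that for any two $\sigma$-algebras (equivalently, two sets of observations) $\mathcal F \subseteq \mathcal G$, the Bayes-optimal estimator of $s_\rho$ given $\mathcal G$ has gain at least that of the one given $\mathcal F$; this is because $\Delta(\hat s_\rho) = \tfrac12\EXP\big| 2\Pr[s_\rho = +1 \mid \cdot] - 1\big|$ and conditioning only increases $\EXP|{\cdot}|$ of the centered posterior by Jensen/tower property. Then for the first inequality I would show $\Pr[s_\rho \mid A_{\rho,2k}, s_{\partial V_{\rho,2k}}] = \Pr[s_\rho \mid A, s_{\partial V_{\rho,2k}}]$ using the conditional-independence / separation claim above, so the oracle with the \emph{smaller} observation set $A_{\rho,2k}$ actually has the \emph{same} posterior as the one with observation set $A$ plus the boundary labels, whose gain is $\ge$ that of $\hat s^*_\rho(A)$. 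For the monotone chain $\Delta(\hat z^*_\rho(A_{\rho,2k})) \ge \Delta(\hat z^*_\rho(A_{\rho,2k+2}))$, I would apply the same separation argument one shell further out: $\partial V_{\rho,2k}$ separates $\rho$ from $\partial V_{\rho,2k+2}$ as well, so $(A_{\rho,2k}, s_{\partial V_{\rho,2k}})$ is a sufficient statistic for $s_\rho$ that is a function of (the strictly larger) $(A_{\rho,2k+2}, s_{\partial V_{\rho,2k+2}})$ — wait, it is not literally a function of it, but conditioning on $(A_{\rho,2k+2}, s_{\partial V_{\rho,2k+2}})$ and then further on $s_{\partial V_{\rho,2k}}$ gives the same posterior as conditioning on $(A_{\rho,2k}, s_{\partial V_{\rho,2k}})$; so the inner oracle's posterior equals the outer oracle's posterior further conditioned on $s_{\partial V_{\rho,2k}}$, and additional conditioning increases the gain.

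The main obstacle I anticipate is making the ``separation'' step fully rigorous in the \emph{hypergraph/directed} setting rather than hand-waving it as a Markov property: the factor $f_u(s_{N_u})$ attached to a worker $u$ straddling distance $2k$ couples a task at distance $2k$ with a task at distance $2k+2$ (or on the boundary), so one must be careful that fixing the task labels at distance exactly $2k$ — rather than cutting edges — is what decouples the two sides. I would handle this by writing $\Pr[s \mid A] \propto \prod_{u} f_u(s_{N_u})$ from \eqref{eq:factor-form}, partitioning the worker factors into those entirely inside $G_{\rho,2k}$ and the rest, noting that any $f_u$ not entirely inside involves only task variables in $V \setminus V_{\rho,2k}$ together with (possibly) task variables in $\partial V_{\rho,2k}$, and then summing out $s_{V\setminus(V_{\rho,2k})}$ with $s_{\partial V_{\rho,2k}}$ held fixed; the sum factorizes and the ``outside'' part is a constant (independent of $s_\rho$) once $s_{\partial V_{\rho,2k}}$ is fixed. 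A minor secondary point is the measure-zero subtlety when the MAP estimator is a coin flip on ties, which does not affect $\Delta$ since it is defined through the posterior magnitude; I would note this in passing and move on.
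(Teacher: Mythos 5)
Your proposal is correct and follows essentially the same route as the paper's proof: the same separation identity $\Pr[s_\rho \mid A_{\rho,2k}, s_{\partial V_{\rho,2k}}] = \Pr[s_\rho \mid A, s_{\partial V_{\rho,2k}}]$ is the crux, combined with monotonicity of the Bayes gain under enlarging the observation set, applied once against $\hat s^*_\rho(A)$ and then shell-by-shell for the chain. The only cosmetic difference is that you phrase the monotonicity step via the tower property and Jensen's inequality on the centered posterior, whereas the paper writes $\Delta$ as a total variation distance between the conditional laws given $s_\rho=\pm1$ and uses that marginalization contracts total variation -- these are the same data-processing fact, and your explicit factorization of $\prod_u f_u(s_{N_u})$ across the boundary is exactly the right way to justify the separation step that the paper states more tersely.
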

Conversely, if an estimator uses less information than another, it performs worse.
Formally, we provide the following lemma whose proof is given in 
Section~\ref{sec:loss-lem-pf}.
\begin{lemma} \label{lem:loss} Consider the Dawid-Skene model with the
  task assignment corresponding to $G = (V, W, E)$ and let $A$ denote
  the set of workers' labels. For any $\rho \in V$ and subset $A' \subset
  A$,
\begin{align*}
 \Delta (\hat{s}^*_\rho (A)) \ge   \Delta (\hat{s}^*_\rho (A')).
\end{align*}
\end{lemma}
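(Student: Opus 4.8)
\textbf{Proof plan for Lemma~\ref{lem:loss}.}
The plan is to establish the intuitive data-processing fact that discarding observations cannot increase the estimation gain, where the gain of the Bayes-optimal estimator from data $A$ equals $\EXP\big|\Pr[s_\rho=+1\mid A]-\tfrac12\big|$. First I would rewrite the gain in this form: since $\hat{s}^*_\rho(A)$ is the MAP rule, $\Pr[s_\rho=\hat{s}^*_\rho(A)\mid A]=\max_{s}\Pr[s_\rho=s\mid A]=\tfrac12+\big|\Pr[s_\rho=+1\mid A]-\tfrac12\big|$, so that after taking expectations (over $s$, $A$, and the graph $G$),
\begin{align*}
\Delta(\hat{s}^*_\rho(A)) = \EXP\left|\,\Pr[s_\rho=+1\mid A]-\tfrac12\,\right|,
\end{align*}
and similarly with $A$ replaced by $A'$. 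Thus the claim reduces to the inequality $\EXP\big|\Pr[s_\rho=+1\mid A']-\tfrac12\big|\le \EXP\big|\Pr[s_\rho=+1\mid A]-\tfrac12\big|$.

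The key step is a conditioning/tower-property argument. Write $X := \Pr[s_\rho=+1\mid A]-\tfrac12$ and observe that, because $A'\subset A$, the tower rule gives $\Pr[s_\rho=+1\mid A']-\tfrac12 = \EXP[\,X \mid A'\,]$. Then by Jensen's inequality applied to the convex function $|\cdot|$ (conditionally on $A'$),
\begin{align*}
\left|\,\Pr[s_\rho=+1\mid A']-\tfrac12\,\right| = \big|\EXP[X\mid A']\big| \le \EXP\big[\,|X|\;\big|\;A'\,\big],
\end{align*}
and taking expectations over $A'$ (and $G$) collapses the right-hand side to $\EXP|X|$, which is exactly $\Delta(\hat{s}^*_\rho(A))$. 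This yields $\Delta(\hat{s}^*_\rho(A'))\le\Delta(\hat{s}^*_\rho(A))$ as desired. The same argument, incidentally, underlies Lemma~\ref{lem:add} with $A'$ there being a \emph{super}-set augmented by the exact labels on the separating boundary.

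The main obstacle — really the only subtlety — is making sure the expectations and conditionings are taken consistently: the gain $\Delta$ here already includes the expectation over the random graph $G$ and over $(s,A)$, so one must be careful that the Jensen step is applied conditionally on $\sigma(A',G)$ and that $A'$ is a measurable function of $A$ so that $\sigma(A')\subseteq\sigma(A)$, validating the tower identity $\Pr[s_\rho=+1\mid A']=\EXP[\Pr[s_\rho=+1\mid A]\mid A']$. Once the reduction to $\EXP|\EXP[X\mid A']|\le\EXP|X|$ is in place, the proof is a one-line application of (conditional) Jensen; no structural properties of the Dawid–Skene model or the graph are needed beyond $A'\subset A$.
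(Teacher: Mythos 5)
Your proof is correct and is essentially the argument the paper gives: the paper writes $\Delta(\hat{s}^*_\rho(A)) = d_{\tt TV}(\phi^+_\rho,\phi^-_\rho)$ for the conditional laws of $A$ given $s_\rho=\pm1$ and observes that marginalizing out $A\setminus A'$ can only decrease the total variation distance, which is exactly your tower-property-plus-Jensen step written in terms of likelihoods rather than posteriors. Your attention to $\sigma(A')\subseteq\sigma(A)$ and to where the expectations over $(s,A)$ and $G$ enter is the right (and only) subtlety.
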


On estimating task $\rho$, BP at $k$-th iteration on $G$ is identical
to BP on $G_{\rho, 2k}$. If $G_{\rho, 2k}$ is a tree, then from
Property~\ref{prop:tree-bp}, BP calculates the exact marginal
probability of $s_\rho$ given $A_{\rho, 2k}$, i.e., 
\begin{align*} 
\hat{s}^{{\MF BP}}_\rho := &~\underset{s_\rho}{\arg \max}~b^{k}_\rho(s_\rho)  
=~\underset{s_\rho}{\arg \max}~\Pr [s_\rho \mid A_{\rho, 2k} ]. 
\end{align*}
Thus,  if $G_{\rho, 2k}$ is a tree, then using Lemmas~\ref{lem:add}~and~\ref{lem:loss} with $A_{\rho, 2k}
\subset A$, we have that
\begin{align}
\Delta( \hat{z}^*_\rho (A_{\rho,2k})) & ~{\ge}~ \Delta( \hat{s}^*_\rho (A)) \cr
& ~{\ge}~ \Delta( \hat{s}^{\MF BP}_\rho ) ~{=}~ \Delta(\hat{s}^*_\rho (A_{\rho, 2k})) \label{eq:main-ineq}
\end{align}
where we define $\hat{s}^*(A_{\rho, 2k}) := \arg \max \Pr [s_\rho \mid
A_{\rho, 2k} ] .$


Consider now a random $(\ell, r)$-regular bipartite graph $G$, which
is a locally tree-like. More formally, from Lemma~5 in \cite{kos2014},
if follows that
\begin{align}  \label{eq:tree-probability}
\Pr[\text{$G_{\rho, 2k}$ is not a tree}] 
\le \frac{3 \ell r }{n} \left( (\ell-1)(r-1) \right)^{2k}.
\end{align}
Hence, by taking the expectation with respect to $G$ and applying
\eqref{eq:tree-probability} to \eqref{eq:main-ineq}, we get
\begin{align}
0 &\le  \EXP \left[\Delta( \hat{s}^*_\rho (A))  - \Delta( \hat{s}^{\MF BP}_\rho ) \right] \cr
& \le \EXP \left[\Delta( \hat{z}^*_\rho (A_{\rho,2k})) - \Delta(\hat{s}^*_\rho (A_{\rho, 2k})) \right]  
+ \frac{3  }{n} (\ell r)^{2k+1} \qquad\label{eq:gain-boud}
\end{align}
where 
the last term in the RHS is less than $2^{-k}$ for sufficiently large
$n$ since we set $k = \log \log n$ and $\ell r = O(\log n)$.  In
addition, from the following lemma, the first term in the RHS is also
less than $2^{-k}$. Hence, this implies \eqref{eq:wts} and the
existence of the limit of
$\lim_{n \to \infty} \EXP [\Delta(\hat{s}^*_\rho(A))]$ due to the
bounded and non-increasing sequence of
$\Delta(\hat{z}^*_\rho(A_{\rho, 2k}))$ in Lemma~\ref{lem:add}. We
complete the proof of Theorem~\ref{thm:optimality}.
\begin{lemma} \label{lem:correlation-decay}
  Suppose $G_{\rho, 2k} = (V_{\rho, 2k}, W_{\rho, 2k}, E_{\rho, 2k})$
  is a tree of which root is task $\rho$ and depth is $2k$, where
  every task except the leaves $\partial V_{\rho, 2k}$ is assigned to
  $l$ workers and every worker labels two tasks. For a given $\mu :=
  \EXP[2p_u - 1] > 0$, there exists a constant $C_{\mu, r}$ such that if
  $\ell \ge C_{\mu, r}$, then for sufficiently large $k$,
\begin{align} \label{eq:correlation-decay}
\left| \Delta(\hat{z}^*_\rho(A_{\rho, 2k})) - \Delta(\hat{s}^*_\rho(A_{\rho, 2k})) \right| \le 2^{-k}.
\end{align}
\end{lemma}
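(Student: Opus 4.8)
\textbf{Proof plan for Lemma~\ref{lem:correlation-decay}.}
The plan is to prove the correlation decay by comparing two belief-propagation runs on the tree $G_{\rho,2k}$ that differ only in their initialization at the leaves $\partial V_{\rho,2k}$: the run whose leaf messages encode the true labels $s_{\partial V_{\rho,2k}}$ (this computes the oracle posterior $\Pr[s_\rho \mid A_{\rho,2k}, s_{\partial V_{\rho,2k}}]$) versus the run whose leaf messages are the trivial uniform $\tfrac12$ (this computes $\Pr[s_\rho \mid A_{\rho,2k}]$). Since both coincide with exact marginalization on a tree by Property~\ref{prop:tree-bp}, the quantity $|\Delta(\hat z^*_\rho(A_{\rho,2k})) - \Delta(\hat s^*_\rho(A_{\rho,2k}))|$ is controlled by how much the choice of boundary condition at depth $2k$ can perturb the message arriving at the root. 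First I would set up a convenient parametrization of messages, e.g. the log-likelihood ratio $x_{i\to u} := \tfrac12\log\frac{m_{i\to u}(+1)}{m_{i\to u}(-1)}$, and derive the recursions that the worker-to-task and task-to-worker updates \eqref{eq:iu-message}--\eqref{eq:ui-message} induce on these quantities; the task-to-worker step is an additive aggregation over $\ell-1$ incoming worker messages, and the worker-to-task step (with $r=2$) is a scalar nonlinear map determined by the factor $f_u$ and the single other incoming task message.

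The key step is a contraction estimate: I would show that along each edge of the path from a leaf to the root, the discrepancy between the two runs' messages is multiplied by a factor strictly less than $1$ in expectation, provided $\ell$ is large enough. Concretely, the worker-to-task map is a Lipschitz contraction with some constant $\gamma<1$ when the worker's reliability is bounded away from the degenerate case (here the assumption $\EXP[(2p_u-1)^2]<1$ enters, ensuring messages are smooth and the relevant Lipschitz constant is finite), and the task-to-worker aggregation, while additive, is followed by a $\tanh$-type squashing that keeps messages bounded; combining the two over one level of the tree and taking expectation over the $\ell$ children and their reliabilities, one gets a per-level multiplicative factor that can be driven below $\tfrac12$ by choosing $\ell \ge C_{\mu,r}$. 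Iterating over the $k$ levels yields a bound of the form $(\text{const}\cdot\gamma(\ell))^{k}$ on the root message discrepancy, which transfers to a bound on $|\Delta(\hat z^*_\rho) - \Delta(\hat s^*_\rho)|$ of order $2^{-k}$ for large $k$, since $\Delta$ is a Lipschitz (indeed $1$-Lipschitz-type) functional of the root log-likelihood ratio. One must also handle the branching: there are exponentially many (roughly $((\ell-1)(r-1))^k$) leaves, so the contraction per level must beat the branching factor; this is exactly why a lower bound on $\ell$ (growing with $r$) is needed, and it is the heart of the constant $C_{\mu,r}$.

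The main obstacle I anticipate is making the contraction argument uniform over the random reliabilities and robust to the fact that BP messages, unlike in the Ising/stochastic-block-model setting, are governed by weighted and signed hyper-edge factors rather than a single scalar edge potential. In particular, the worker-to-task update depends on the joint realization of the worker's answers $A_{\cdot u}$ and on the incoming task message in a way that is not a simple monotone scalar recursion, so I would need to control the expected Lipschitz constant $\EXP[\gamma_u]$ carefully, separating the "typical" workers (reliability in a good range, giving strong contraction) from atypical ones, and ensuring the bad events have small enough probability that summing over the $\ell$ children still gives an overall factor below the branching threshold. A secondary technical point is the base case at the leaves: the two initializations differ by a bounded amount (uniform vs. point mass), so the initial discrepancy is $O(1)$, and I must verify the recursion starts from this finite value and that no blow-up occurs at intermediate internal nodes where many child discrepancies are summed. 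Once the per-level expected contraction below $\tfrac12$ is established, iterating $k$ times and invoking Jensen/linearity to pull the expectation through gives \eqref{eq:correlation-decay} directly.
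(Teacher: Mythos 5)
Your plan matches the paper's proof essentially step for step: the paper also compares the exact tree marginal $\Pr[s_\rho\mid A_{\rho,2k}]$ with the boundary-conditioned marginal $\Pr[s_\rho\mid A_{\rho,2k}, s_{\partial V_{\rho,2k}}]$ (encoded as magnetizations $X_i$ and $Y_i$, i.e., precisely the two leaf initializations you describe), bounds $\left|\Delta(\hat z^*_\rho)-\Delta(\hat s^*_\rho)\right|$ by $\EXP\left[|X_\rho-Y_\rho|\right]$, and establishes the per-level expected contraction $\EXP\left[|X_{i(t+1)}-Y_{i(t+1)}|\right]\le \tfrac12\,\EXP\left[|X_{i(t)}-Y_{i(t)}|\right]$ via the mean value theorem, with the gradient controlled by a product of $\ell-1$ factors each at most $\sqrt{1-\mu^2/4}<1$ so as to beat the branching factor $(\ell-1)(r-1)$ exactly as you anticipate, using concentration of the child messages near the truth (your typical/atypical split, obtained from the majority-voting error bound) and the smoothness guaranteed by $\EXP[(2p_u-1)^2]<1$. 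There are no substantive differences in approach.
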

A rigorous proof of Lemma~\ref{lem:correlation-decay} is given in
Section~\ref{sec:correlation-decay-pf}.
Here, we briefly provide the underlying intuition on the proof. As
long as $\mu$ is strictly greater than $0$ and $l$ is sufficiently
large, the majority voting of the one-hop information $\{A_{\rho u} :
u \in M_{\rho}\}$ can achieve high accuracy. On the other hand,
intuitively 
the information in two or more hops is less useful. In the proof of
Lemma~\ref{lem:correlation-decay}, we also provide a quantification of
the {\it decaying rate of the correlation} from the information on
$\partial V_{\rho, 2k}$ to $\rho$ as the distance $2k$ increases.


\subsection{Proof of Theorem~\ref{thm:superiority}} 
\label{sec:superiority-thm-pf}

We note that that KOS is an iterative algorithm where for each $\rho
\in V$ and $k \ge 1$, $\hat{s}^{{\MF KOS}, k}_{\rho}$ depends on only
$A_{\rho, 2k}$ defined in \eqref{eq:k-hop-info}. In addition, it is
clear that MV uses only one-hop information $A_{\rho, 1} \subset
A_{\rho, 2k}$. Hence for given $A_{\rho, 2k}$, the MAP estimator
$\hat{s}^*_\rho(A_{\rho, 2k})$ outperforms MV and KOS, i.e.,
\begin{align} \label{eq:KOS-sub-optimal}
\Delta (\hat{s}^*_\rho(A_{\rho, 2k})) 
\ge \max \left\{ \Delta (\hat{s}^{{\MF MV}}_\rho), \Delta (\hat{s}^{{\MF KOS}, k}_\rho) \right\}.
\end{align}

Recall that if $G_{\rho, 2k}$ is a tree, we have $\hat{s}^{{\MF BP}, k}_\rho =
\hat{s}^*_\rho(A_{\rho, 2k})$. 
Similarly to \eqref{eq:gain-boud}, by taking the expectation with
respect to $G$, it follows that
\begin{align*}
\EXP \left[\Delta( \hat{s}^{{\MF BP}, k}_\rho ) \right] 
~\ge~
 \EXP \left[\max \left\{ \Delta (\hat{s}^{{\MF MV}}_\rho), \Delta (\hat{s}^{{\MF KOS}, k}_\rho) \right\} \right] 
- \frac{3}{n} (\ell r)^{2k+1}
\end{align*}
where the last term goes $0$ as $n \to \infty$ if $\ell r = O(\log n)$
and $k = \log \log n$. This completes the proof of
Theorem~\ref{thm:superiority}.

\section{Proofs of Lemmas} 

\subsection{Proof of Lemma~\ref{lem:add}}
\label{sec:add-lem-pf}

We start with the conditional probability of error given
$A$ in the following: 
\begin{align*}
 \Pr[s_\rho \neq \hat{s}^*_\rho(A) \mid A]  = \min \left\{ \Pr[s_\rho = +1 \mid A], \Pr[s_\rho = -1 \mid A] \right\}.
\end{align*}
This directly implies that
\begin{align}
\Delta(\hat{s}^*_\rho (A)) &=  \EXP \Big[\frac{1}{2} - \Pr[s_\rho \neq \hat{s}^*_\rho (A) \mid A] \Big] \nonumber \\
& =\frac{1}{2} \EXP \Big[ \big| \Pr[s_\rho  = +1 \mid A] - \Pr[s_\rho  = -1 \mid A]  \big|  \Big]. \qquad  \label{eq:alt-delta}
\end{align}
Then, by simple algebra, it follows that
\begin{align*}
\Delta(\hat{s}^*_\rho (A)) 
& = \frac{1}{2}\sum_{A} \Pr[A] \cdot \big| \Pr[s_\rho  = +1 \mid A] - \Pr[s_\rho  = -1 \mid A]  \big|  \\
& = \frac{1}{2}\sum_{A} | \Pr[A, s_\rho = +1] - \Pr[A, s_\rho = -1] | \\
& = \frac{1}{2}\sum_{A} \frac{1}{2} | \Pr[A \mid s_\rho = +1] - \Pr[A \mid s_\rho = -1] |
\end{align*}
where for the last equality we use $\Pr[s_\rho = +1] = \Pr[s_\rho =
-1] = 1/2.$

Let $\phi^{+}_\rho$ denote the distribution of $A$ given $s_\rho =
+1$, and let $\phi^-_\rho$ be the distribution of $A$ given $s_\rho =
-1$, i.e.,
\begin{align*}
\phi^+_i (A) = \Pr[A \mid s_i = +1] \text{~and~} \phi^-_i (A) = \Pr[A \mid s_i = -1].
\end{align*}
Then we have a simple expression of $\Delta(\hat{s}^*_\rho (A))$ as follows:
\begin{align} \label{eq:tv-optimal}
\Delta(\hat{s}^*_\rho (A)) = d_{\tt TV} (\phi^+_\rho, \phi^-_\rho)
\end{align}
where we let $d_{\tt TV}$ denotes the total variation distance, i.e.,
for distributions $\phi$ and $\psi$ on the same space $\Omega$, we define
\begin{align*}
d_{\tt TV } (\phi, \psi) := \frac{1}{2} \sum_{\sigma \in \Omega} |\phi(\sigma) - \psi(\sigma)|.
\end{align*}

Next we note that since $\partial V_{\rho, 2k}$ blocks every path from
the outside of $G_{\rho, 2k}$ to $\rho$, the information on the
outside of $G_{\rho, 2k}$, $A \setminus A_{\rho, 2k}$, is independent
of $s_\rho$ given $s_{\partial V_{\rho, 2k}}$, i.e.,
\begin{align} \label{eq:block}
\Pr[s_\rho \mid A_{\rho, 2k}, s_{\partial V_{\rho, 2k}}] = \Pr[s_\rho \mid A, s_{\partial V_{\rho, 2k}}].
\end{align}
Hence if we set $\psi^+_{\rho, 2k}$ to be the distribution of $A$ and
$s_{\partial V_{\rho, 2k}}$ given $s_\rho = +1$ and similarly for
$\psi^-_{\rho, 2k}$, we have
\begin{align*} 
\Delta(\hat{z}^*_\rho (A_{\rho, 2k})) = d_{\tt TV} (\psi^+_{\rho, 2k}, \psi^-_{\rho, 2k}).
\end{align*}
Noting that $\phi^+_\rho$ and $\phi^-_\rho$ can be obtained by
marginalizing out $s_{\partial V_{\rho, 2k}}$ in $\psi^+_{\rho, 2k}$ and
$\psi^-_{\rho, 2k}$, it follows that
\begin{align} 
d_{\tt TV } (\phi^+_{\rho}, \phi^-_{\rho}) 
&= \frac{1}{2}\sum_{A} |\phi^+_\rho(A)- \phi^-_\rho(A)| \cr
&= \frac{1}{2}\sum_{A} 
\left| \sum_{s_{\partial V_{\rho, 2k}}} 
\left(\psi^+_i(A, s_{\partial V_{\rho, 2k}}) -  \psi^-_i(A, s_{\partial V_{\rho, 2k}}) \right) \right| \cr
&\le \frac{1}{2}\sum_{A} \sum_{s_{\partial V_{\rho, 2k}}} \left|  \psi^+_i(A, s_{\partial V_{\rho, 2k}}) -  \psi^-_i(A, s_{\partial V_{\rho, 2k}}) \right| \cr
&= d_{\tt TV } (\psi^+_{\rho, 2k}, \psi^-_{\rho, 2k}) \label{eq:marginal-inequality}
\end{align}
which implies $\Delta(\hat{z}^*(A_{\rho, 2k})) \ge \Delta(\hat{s}^*(A))$.

We now study $\Delta(\hat{z}^*(A_{\rho, 2k}))$ with different $k$.
Observe that $\partial V_{\rho, 2k}$ blocks every path from $\partial
V_{\rho, 2k+2}$ to $\rho$, i.e., $s_{\partial V_{\rho, 2k+2}}$ is
independent of $s_\rho$ given $s_{\partial V_{\rho, 2k}}$. Thus from
\eqref{eq:block} it follows that
\begin{align*}
\Pr[s_\rho \mid A, s_{\partial V_{\rho, 2k}}] =\Pr[s_\rho \mid A, s_{\partial V_{\rho, 2k}}, s_{\partial V_{\rho, 2k+2}}].
\end{align*}
Therefore, $\psi^+_{\rho, 2k+2}$ and $\psi^-_{\rho, 2k+2}$ can be
obtained from $\psi^+_{\rho, 2k}$ and $\psi^-_{\rho, 2k}$ by
marginalizing out $s_{\partial V_{\rho, 2k+2}}$. Similarly to
\eqref{eq:marginal-inequality}, we have
\begin{align*}
 d_{\tt TV } (\psi^+_{\rho, 2k+2}, \psi^-_{\rho, 2k+2}) \le d_{\tt TV } (\psi^+_{\rho, 2k}, \psi^-_{\rho, 2k})
\end{align*}
which completes the proof of Lemma~\ref{lem:add}.

\subsection{Proof of Lemma~\ref{lem:loss}}
\label{sec:loss-lem-pf}

The proof of Lemma~\ref{lem:loss} is analog to that of
Lemma~\ref{lem:add}.  Let $\varphi^+_{\rho}$ be the distribution of
$A'$ given $s_{\rho} = +1$ and $\varphi^-_{\rho}$ be the distribution
of $A'$ given $s_{\rho} = -1$, i.e.,
\begin{align*} 
\Delta(\hat{s}^*_\rho (A')) = d_{\MF TV} (\varphi^+_\rho, \varphi^-_\rho).
\end{align*}
Since $\varphi^+_{\rho}$ and $\varphi^-_{\rho}$ can be obtained by
marginalizing out $A \setminus A'$ from $\phi^+_{\rho}$ and
$\phi^-_{\rho}$ in \eqref{eq:tv-optimal}, using the same logic for
\eqref{eq:marginal-inequality}, we have
\begin{align*}
d_{\tt TV } (\varphi^+_{\rho}, \varphi^-_{\rho}) \le d_{\tt TV } (\phi^+_{\rho}, \phi^-_{\rho})
\end{align*}
which completes the proof of Lemma~\ref{lem:loss}.



\subsection{Proof of Lemma~\ref{lem:correlation-decay}}
\label{sec:correlation-decay-pf}

We start with several notations which we use in the proof. For $i
\in V_{\rho, 2k}$, let $T_{i} = (V_{i}, W_{i}, E_{i})$ be the subtree
rooted from $i$ including all the offsprings of $i$ in tree $G_{\rho,
  2k}$. We let $\partial V_{i}$ denote the leaves in $T_i$ and $A_{i}
:= \{A_{ju} : (j, u) \in E_{i}\}$. Define
\begin{align*}
X_{i} 
&:= \Pr[ s_i = + 1 \mid  A_{i}]  -  \Pr[ s_i = -1  \mid  A_{i}]
\end{align*}
Here $X_{i}$ is often called the {\it magnetization} of $s_{i}$ given
$A_{i}$.  Similarly, given $A_{i}$ and
$s_{\partial V_i}$, we define the {\it biased} magnetization $Y_i$:
\begin{align*}
Y_{i} 
&:= \Pr[ s_i = + 1 \mid  A_{i}, s_{\partial V_{i}}]  -  \Pr[ s_i = -1  \mid  A_{i}, s_{\partial V_{i}}].
\end{align*}
Using the alternative expression of $\Delta$ in \eqref{eq:alt-delta},
one can check that
\begin{align*}
0 \;\; \le\;\;  \Delta(\hat{z}_i^*(A_{i})) - \Delta(\hat{s}_i^*(A_{i}))  &\; =\;\; \frac{1}{2}\EXP \big[ |Y_{i}| - |X_i| \big]\\
&\; \le \;\; \EXP [|Y_i - X_i|]
\end{align*}
where the expectation is taken with respect to $A_i$ and $s_{\partial
  V_i}$.

Next, for $0 \le t \le k$, we define $i(t) \in \partial V_{\rho,
  2k-2t}$ to be a random node chosen uniformly at random so that
$i(0)$ is a leaf node in $G_{\rho, 2k}$, i.e., $X_{i(0)} = 0$ thus
$|X_{i(0)} - Y_{i(0)}| \le 1$, and $i(k)$ is the root $\rho$, i.e.,
$\Delta(\hat{z}_\rho^*(A_{\rho})) - \Delta(\hat{s}_\rho^*(A_{\rho})) =
\frac{1}{2}\EXP \big[ |Y_{\rho}| - |X_\rho| \big]$. Therefore it is
enough to show that for each $0 \le t < k$
\begin{align} \label{eq:recurrence-wts}
\EXP \left[ {\big|X_{i(t+1)} - Y_{i(t+1)} \big|}  \right]
 \le \frac{1}{2} \EXP \left[ { \big|X_{i(t)} - Y_{i(t)} \big|}  \right]
\end{align}
since this implies 
\begin{eqnarray}
	\EXP \left[ {\big|Y_{\rho} - X_{\rho} \big| }\right]  \;\le\; 2^{-k} \;,
\end{eqnarray} 
and hence $\EXP \left[ \big|Y_{\rho} - X_{\rho} \big|
\right] \to 0$ as $k \to \infty$.
Here $\EXP \left[ {|X_{i(t)} - Y_{i(t)}|} \right]$ quantifies
the correlation from the information at the leaves $\partial V_{i(t)}$
to $s_{i(t)}$. We will show that the correlation exponentially decays with respect to $0 \le t < k$ in what
follows.


\begin{figure}[t]
  \begin{center}
    \begin{minipage}{.55\textwidth}
\centering
      \includegraphics[width=0.4\columnwidth]{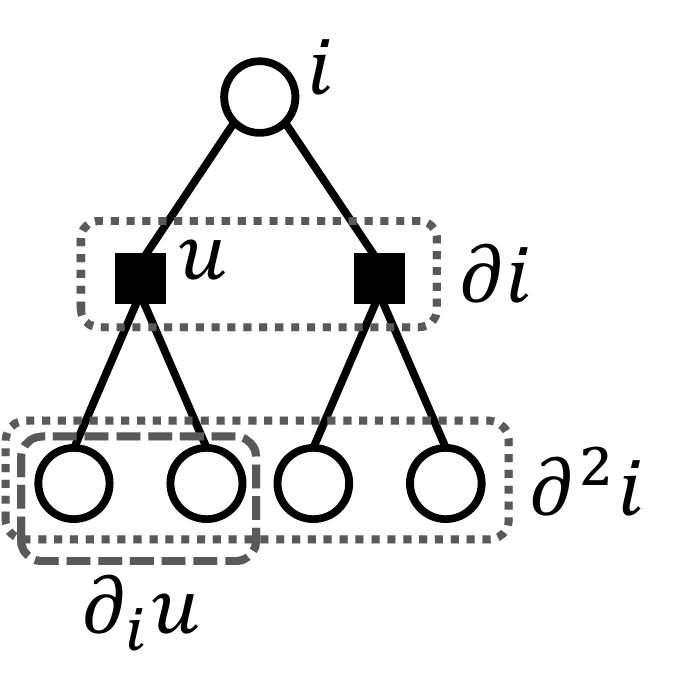}
      \caption{A graphical representation of notations: $\partial
        i, \partial_{i}u,$ and $\partial^2 i$.}
\label{fig:notation}
    \end{minipage}
      \end{center}
\end{figure}

To do so we study certain recursions describing relations among $X$
and $Y$.  Let $\partial i$ be the set of all the offspring of $i$ and
$\partial_i u$ be the set of all the offspring of $u$ in tree $T_i$,
i.e., $\partial i : = \{u \in W_{i} : (i,u) \in E_{i}\}$ and
$\partial_i u : = \{j \in V_{i} : (j,u) \in E_i \}$. (See
Figure~\ref{fig:notation} for a graphical explanation of the
notations.)  Also, define $A_u := \{A_{iu} : (i, u) \in E \}$ and
$\mu_u := (2p_u - 1) \in [-1, 1]$ such that $\mu = \EXP[2p_u -
1] = \EXP[\mu_u] > 0$. Then $f_u$ in \eqref{eq:factor-form} can be
expressed as follows:
\begin{align*}
f_u(s_{N_u}) = \EXP
\left[\prod_{j \in N_u} \frac{1+A_{ju}s_j \mu_u}{2} \right]
\end{align*}
where the expectation is taken for $\mu_u$. Also, using the above
expression of $f_u$ and the fact that $\Pr[s_j \mid A_j] = \frac{1+s_j
  X_j}{2}$, we first write the marginal probability of $s_i$
given $A_u$ and $X_{\partial_{i\!}u}$:
\begin{align*}
\Pr \left[s_i \mid A_{u}, X_{\partial_{i\!} u} \right]& = 
 \sum_{s_{\partial_{i\!} u} } 
 f_u(s_i, s_{\partial_{i\!} u})
  \prod_{j \in \partial_{i\!} u }  \frac{1+s_jX_j}{2}  \\
&=
\sum_{s_{\partial_{i\!} u} } 
\EXP_\mu  \left[\frac{1+A_{iu} s_i \mu_u}{2}
 \prod_{j\in \partial_{i\!} u} \frac{(1+A_{ju}s_j \mu_u)({1+s_jX_j})}{4} \right] \\
&=
\EXP_\mu \left[\frac{1+A_{iu} s_i \mu_u}{2}
 \prod_{j\in \partial_{i\!} u}  \frac{1+ A_{ju} \mu_u X_j }{2}  \right]
\end{align*}
where we let $\EXP_{\mu}$ denote the expectation for only $\mu_u$.
For notational convenience, we define $g^{+}_{iu}$ and $g^{-}_{iu}$ as follows:
\begin{align*}
g^{+}_{iu} (X_{\partial_{i\!} u}; A_{u}) 
&:= \Pr \left[s_i = +1  \mid A_{u}, X_{\partial_{i\!} u} \right] \\
&=
\EXP_{\mu} \left[\frac{1+A_{iu} \mu_u}{2}
 \prod_{j\in \partial_{i\!} u}  \frac{1+ A_{ju} \mu_u X_j }{2}  \right], \\
g^{-}_{iu} (X_{\partial_{i\!} u}; A_{u}) 
&:= \Pr \left[s_i = -1  \mid A_{u}, X_{\partial_{i\!} u} \right] \\
&=
\EXP_{\mu} \left[\frac{1-A_{iu} \mu_u}{2}
 \prod_{j\in \partial_{i\!} u}  \frac{1+ A_{ju} \mu_u X_j }{2}  \right]
\end{align*}
where we may omit $A_u$ in the argument of $g^{+}_{iu}$ and $g^{-}_{iu}$
if $A_u$ is clear from the context.
Using Bayes' rule with $g^{+}_{iu}$ and $g^{-}_{iu}$, we obtain
the following {\it recurrence} for $X$:
\begin{align} \label{eq:recurrence-X}
X_{i} &= h_i(X_{\partial^2 i}) := \frac{\prod_{u \in \partial i} g^+_{iu} (X_{\partial_i u}) -\prod_{u \in \partial i} g^-_{iu} (X_{\partial_i u})}
{\prod_{u \in \partial i} g^+_{iu} (X_{\partial_i u}) +\prod_{u \in \partial i} g^-_{iu} (X_{\partial_i u})}
\end{align}
where we let $\partial^2 i$ denote the set of all the second offspring
of $i$, i.e., $\partial^2 i : = \bigcup_{u \in \partial i} \partial_i
u$.

For simplicity, we focus on a non-leaf/root node $i \in V_{\rho, 2k}$
such that $ i \notin \partial V_{\rho, 2k}$ and $i \neq \rho$ so that
$\left|\partial^2 i \right| = (\ell - 1) \cdot (r - 1)$ and we
consider the case where $s_j = +1$ for all $j$ without loss of
generality since the true label $s_j$ is uniformly distributed and the
choice of $i(t)$ in \eqref{eq:recurrence-wts} is uniform. Then, to
prove \eqref{eq:recurrence-wts}, it is enough to show that
\begin{align}
\EXP^+ \left[ {|X_i - Y_i|}  \right]  
\le \frac{1}{2 (\ell - 1) (r - 1)} \sum_{j \in \partial^2 i}\EXP^+ \left[ {|X_{j} - Y_j|}  \right]
\label{eq:final-wts}
\end{align}
where we let $\EXP^+$ denote the conditional expectation given $s_j =
+1$ for all $j$.

To show \eqref{eq:final-wts}, we will use the mean value theorem. We
first obtain a bound on gradient of $h_i(x)$ for $x \in
[-1,1]^{\partial^2 i}$. Define $g^+_i(x): = \prod_{u \in \partial i}
g^+_{iu}(x_{\partial_{i\!} u})$ and $g^-_i(x): = \prod_{u \in \partial
  i} g^-_{iu}(x_{\partial_{i\!} u})$.  Then, using basic calculus, we
obtain that for $j \in \partial_{i} u$,
\begin{align*}
  \frac{\partial h_i}{\partial x_j}
  &= \frac{\partial }{\partial x_j} \frac{g^+_i -  g^-_i}{g^+_i+ g^-_i}\\
 &= \frac{2}{(g^+_i + g^-_i)^2} \left(
g^-_i \cdot\frac{\partial g^+_i}{\partial x_j}  - g^+_i \cdot \frac{\partial g^-_i}{\partial x_j}
\right)
 \\
 &= \frac{2 g^+_i  g^-_i}
{(g^+_i + g^-_i)^2}
\left(
\frac{1}{g^+_{iu}} \frac{\partial g^+_{iu}}{\partial x_j} %
- \frac{1}{g^-_{iu}} \frac{\partial g^-_{iu}}{\partial x_j} %
\right).
\end{align*}

Using the fact that for $x \in [-1, 1]^{\partial^2 i}$, both $g^+_i$ and $g^-_i$ are
positive, it is not hard to show that
\begin{align} \label{eq:diff-upper}
 \frac{g^+_i g^-_i}{(g^+_i + g^-_i)^2} \le \sqrt{\frac{g^-_i}{g^+_i}}.
\end{align}
We note here that one can replace ${g^-_i}/{g^+_i}$ with ${g^+_i}/{g^-_i}$ in the upper bound.
However, in our analysis, we use \eqref{eq:diff-upper} since we 
focus on the case of $s_i = +1$ where plugging $X_{\partial^2 i}$ or
$Y_{\partial^2 i}$ into $x$ in \eqref{eq:diff-upper},
 $h_i(x)$, which is the magnetization $X_i$
or $Y_i$, will be large thus ${g^-_i}/{g^+_i}$ will be a tighter upper
bound than ${g^+_i}/{g^-_i}$. Our analysis covers all the general
cases because the same analysis with ${g^+_i}/{g^-_i}$ will work with
$s_i = -1$ conversely.

From \eqref{eq:diff-upper}, it follows that for $x \in [-1, 1]^{\partial^2 i}$,
\begin{align*}
\left| \frac{\partial h_i}{\partial x_j} (x) \right| 
\le \left|g'_{ij}(x_{\partial_{i\!}u})  \right| \cdot  \prod_{u' \in \partial i \,:\, u' \neq u} \sqrt{ \frac{g^-_{iu'} (x_{\partial_{i\!}u'})}{g^+_{iu'} (x_{\partial_{i\!}u'})} }
\end{align*}
where we define
\begin{align*}
g'_{ij}(x_{\partial_{i\!}u}) 
 :=
2  \sqrt{ \tfrac{g^-_{iu} (x_{\partial_{i\!}u})}{g^+_{iu} (x_{\partial_{i\!}u})} }
\left(
\tfrac{1}{g^+_{iu} (x_{\partial_{i\!}u})} \tfrac{\partial g^+_{iu}(x_{\partial_{i\!}u})}{\partial x_j} %
- \tfrac{1}{g^-_{iu}(x_{\partial_{i\!}u})} \tfrac{\partial g^-_{iu}(x_{\partial_{i\!}u})}{\partial x_j} %
\right).
\end{align*}
From the assumption on $\mu_u$ (or $p_u$), i.e., $\EXP[\mu_u] > 0$ and  $\EXP[\mu_u^2] < 1$, it follows that for all
$x_{\partial_{i\!} u} \in [-1,1]^{\partial_{i\!}u}$,
$g^+_{iu} (x_{\partial_{i\!} u}) > 0$ and $g^-_{iu} (x_{\partial_{i\!} u}) > 0$.
 Thus, for given $r$, we can find finite $\eta$, which is a constant with respect to $\ell$, such that
\begin{align*}
\max_{x_{\partial_{i\!}u} \in [-1,1]^{\partial_{i\!}u}} |g'_{ij}(x_{\partial_{i\!}u})|
~ \le ~
\eta ~<~ \infty.
\end{align*}
Hence, we have
\begin{align}
\left| \frac{\partial h_i}{\partial x_j} (x) \right| 
\le \eta
 \cdot  \prod_{u' \in \partial i \,:\, u' \neq u} \sqrt{ \frac{g^-_{iu'} (x_{\partial_{i\!}u'})}{g^+_{iu'} (x_{\partial_{i\!}u'})} }.
 \label{eq:diff_bdd}
\end{align}

Let $\EXP^+_{X, Y}$ denote the conditional expectation 
given $X_{\partial^2 i}$, $Y_{\partial^2 i}$, and $s_j = +1$ for all $j$.
Then, using the mean value theorem with \eqref{eq:diff_bdd}, it follows that for given $X_{\partial^2 i}$ and $Y_{\partial^2 i}$, 
there exists $\lambda' \in [0,1]$
such that
\begin{align}
 \EXP^+_{X, Y} |h_i(X_{\partial^2 i}) - h_i(Y_{\partial^2 i})| 
&\le \sum_{u \in \partial {i}}\sum_{j \in \partial_{i} u} 
|X_{j} - Y_{j}| 
\times \EXP^+_{X, Y} \left[ \left|  \frac{\partial h_i}{\partial x_j}  \left(\lambda' X_{\partial^2 i} + (1-\lambda') Y_{\partial^2 i} \right)\right| \right] 
\nonumber  \\
& \le \sum_{u \in \partial {i}}\sum_{j \in \partial_{i} u} 
|X_{j} - Y_{j}| \times
\eta 
\prod_{u' \in \partial i \setminus \{u\}} \max_{\lambda \in [0,1]} \left\{
\EXP^+_{X,Y} \left[
\sqrt{
\tfrac{g^-_{iu'} (\lambda X_{\partial_{i\!}u'}+(1-\lambda) Y_{\partial_{i\!}u'})}{g^+_{iu'} (\lambda X_{\partial_{i\!}u'}+(1-\lambda) Y_{\partial_{i\!}u'})}
}
\right]
\right\}.
 \label{eq:mean-value}
\end{align}
We note that each term in an element of the summation in the RHS is independent to each other.
Thus, from the symmetry among $\{X_{\partial_{i\!} u}\}_{u \in \partial i}$, it follows that
\begin{align}  
\EXP^+ \left[ {|X_i - Y_i|} \right]
~ \le~  
 \sum_{u \in \partial {i}}\sum_{j \in \partial_{i} u} 
\EXP^+ \left[ {|X_{j} - Y_{j}|} \right]  \times
{\eta}
\cdot \left(
\EXP^+ 
\left[ 
\max_{\lambda \in [0,1]} 
\Gamma(\lambda X_{\partial_{i\!}u}+(1-\lambda) Y_{\partial_{i\!}u})
\right]
\right)^{\ell - 1} ~~
\label{eq:exp_bdd}
\end{align}
where we define function $\Gamma(x_{\partial_{i\!} u})$ for given $x_{\partial_{i\!} u} \in [-1,1]^{\partial_{i\!} u}$ 
as follows:
\begin{align*}
\Gamma(x_{\partial_{i\!} u})  & := \EXP^+_{X, Y} \left[ 
\sqrt{\frac{g^-_{iu}(x_{\partial_{i\!} u})}{g^+_{iu}(x_{\partial_{i\!} u})}} \right] \\
& = \sum_{A_u \in \{-1, +1\}^{N_u} }
\Pr{\text{$^+$}}[A_u ] 
\cdot 
\sqrt{
\frac{g^-_{iu} (x_{\partial_{i\!} u};A_u)}{g^+_{iu} (x_{\partial_{i\!} u};A_u)}
}
 \\
& = \sum_{A_u \in \{-1, +1\}^{N_u} } 
\EXP_{\mu} \left[ \prod_{j \in N_u} \frac{1+A_{ju} \mu_u}{2}\right]
 \times
\sqrt{
\frac{
\EXP_{\mu} \left[\frac{1-A_{iu} \mu_u}{2}
 \prod_{j\in \partial_{i\!} u}  \frac{1+ A_{ju} \mu_u x_j }{2}  \right] 
}
{
\EXP_{\mu} \left[\frac{1+A_{iu} \mu_u}{2}
 \prod_{j\in \partial_{i\!} u}  \frac{1+ A_{ju} \mu_u x_j }{2}  \right] }
}
\end{align*}
where we let $\Pr{\text{$^+$}}$ denote the conditional probability measure given that $s_j$ for all $j$.

We obtain a bound of the last term of \eqref{eq:exp_bdd} in the following lemma whose proof
is presented in Section~\ref{sec:one-lem-pf}.
\begin{lemma} \label{lem:one}
 For given $\pi$ such that $\mu : =  \EXP [\mu_u] > 0$ and $\EXP[\mu_u^2] < 1$, 
 there  exists constant $C'_{\pi, r}$such that for any $\ell \ge C'_{\pi, r}$,
\begin{align*}
 \EXP^+ \left[
 \max_{\lambda \in [0, 1]} 
\Gamma(\lambda X_{\partial_{i\!}u}+(1-\lambda) Y_{\partial_{i\!}u})
\right]
~\le~ 
\sqrt{1-\frac{\mu^2}{4}}
~<~ 1.
\end{align*}
\end{lemma}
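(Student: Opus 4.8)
\textbf{Proof plan for Lemma~\ref{lem:one}.}
The quantity to control, $\EXP^+\big[\max_{\lambda\in[0,1]}\Gamma(\lambda X_{\partial_i u}+(1-\lambda)Y_{\partial_i u})\big]$, involves a fixed function $\Gamma$ on the compact cube $[-1,1]^{r-1}$ that does \emph{not} depend on $\ell$, together with random arguments $X_{\partial_i u},Y_{\partial_i u}$ that do. The plan is to show $\Gamma$ is continuous and uniformly small near the fully aligned point $\mathbf 1=(1,\dots,1)$, and that for $\ell$ large the random arguments — hence every convex combination of them — concentrate in a small box around $\mathbf 1$, the exceptional event costing only $e^{-\Omega(\ell)}$.

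\emph{Regularity and the value at $\mathbf 1$.} Each denominator $g^+_{iu}(x;A_u)$ is a polynomial in $x$, bounded below by a positive constant on $[-1,1]^{r-1}$ precisely because $\EXP[\mu_u^2]<1$ (if $p_u\in\{0,1\}$ a.s., then $\mu_u=\pm1$ a.s.\ and $g^+_{iu}$ can vanish — this is where that hypothesis is used). Hence $\Gamma$ is continuous, with a finite maximum $M_{\pi,r}$ on the cube, both independent of $\ell$. Evaluating at $x=\mathbf 1$: since $\Pr^+[A_u]=g^+_{iu}(\mathbf 1;A_u)$, summing over $A_{iu}$ first yields $\Gamma(\mathbf 1)=\sum_{A_{\partial_i u}}\sqrt{a^2-b^2}=\sum_{A_{\partial_i u}}a\sqrt{1-(b/a)^2}$ with $a=\EXP_\mu\big[\prod_{j\in\partial_i u}\tfrac{1+A_{ju}\mu_u}{2}\big]\ge0$, $\sum a=1$, $b=\EXP_\mu\big[\mu_u\prod_{j\in\partial_i u}\tfrac{1+A_{ju}\mu_u}{2}\big]$, $\sum b=\mu$, $|b|\le a$. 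Concavity of $t\mapsto\sqrt{1-t^2}$ and Jensen's inequality then give
\[
\Gamma(\mathbf 1)\ \le\ \sqrt{1-\Big(\textstyle\sum_{A_{\partial_i u}}b\Big)^2}\ =\ \sqrt{1-\mu^2}\ <\ \sqrt{1-\mu^2/4}.
\]
A parallel elementary computation shows $\max_{c\in[0,1]}\Gamma(c\,\mathbf 1)=\Gamma(\mathbf 0)=\sqrt{1-\mu^2}$, which disposes of the \emph{leaf} instances of \eqref{eq:final-wts} — those with $X_j=0$, $Y_j=1$ for every child $j$ — with no condition on $\ell$.

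\emph{Concentration and assembly.} By continuity there is $\varepsilon=\varepsilon(\pi,r)\in(0,1)$ with $\Gamma(x)\le\gamma_0:=\tfrac12\big(\sqrt{1-\mu^2}+\sqrt{1-\mu^2/4}\big)<\sqrt{1-\mu^2/4}$ whenever $\|x-\mathbf 1\|_\infty\le\varepsilon$; and if $X_{\partial_i u},Y_{\partial_i u}\in[1-\varepsilon,1]^{r-1}$, so is every point of the segment joining them, whence $\max_\lambda\Gamma\le\gamma_0$. For an \emph{interior} child $j$ (all cases except the leaf step) conditioned on $s_j=+1$, node $j$ still collects $\ell-1$ independent worker answers whose posterior log-likelihood contributions $\log\big(g^+_{ju}(X_{\partial_j u};A_u)/g^-_{ju}(X_{\partial_j u};A_u)\big)$ over $u\in\partial j$ are i.i.d., have strictly positive mean (a Kullback--Leibler quantity, positive since $\mu>0$) and bounded range (by the regularity step); a Chernoff bound gives $\Pr^+[X_j<1-\varepsilon]\le e^{-\Omega(\ell)}$, and the same estimate with the correct leaf labels $s_{\partial V_j}=+1$ added as conditioning gives $\Pr^+[Y_j<1-\varepsilon]\le e^{-\Omega(\ell)}$. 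Union bounding over the $r-1$ children and using $\Gamma\le M_{\pi,r}$ on the bad event,
\[
\EXP^+\Big[\max_{\lambda\in[0,1]}\Gamma\big(\lambda X_{\partial_i u}+(1-\lambda)Y_{\partial_i u}\big)\Big]\ \le\ \gamma_0+(r-1)\,M_{\pi,r}\,e^{-\Omega(\ell)}\ \le\ \sqrt{1-\mu^2/4}
\]
once $\ell\ge C'_{\pi,r}$, with $C'_{\pi,r}$ chosen so the second term is at most $\sqrt{1-\mu^2/4}-\gamma_0$. Combined with the leaf case this proves Lemma~\ref{lem:one}.

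\emph{Main obstacle.} The delicate step is the concentration: showing $X_j,Y_j\to1$ at an $e^{-\Omega(\ell)}$ rate \emph{uniformly} over the influence of the deeper part of the subtree (not obviously negligible a priori) and over the extra information used by $Y_j$. This is exactly what forces ``$\ell$ large'': $\Gamma$ really can exceed $\sqrt{1-\mu^2/4}$ (indeed exceed $1$) at other points of $[-1,1]^{r-1}$, so one must exploit that the relevant arguments sit near $\mathbf 1$. The diagonal identity $\max_{c\in[0,1]}\Gamma(c\mathbf 1)=\Gamma(\mathbf 0)$ needed for the leaf step is a second, more routine, computation.
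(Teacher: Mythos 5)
Your proposal is correct in outline and follows the same decomposition as the paper: bound $\Gamma$ uniformly on the cube using $\EXP[\mu_u^2]<1$, show $\Gamma(\mathbf 1)\le\sqrt{1-\mu^2}$, show the arguments concentrate near $\mathbf 1$ at rate $e^{-\Omega(\ell)}$, and assemble by a union bound. Two of your steps differ only cosmetically from the paper's: you bound $\Gamma(\mathbf 1)$ by Jensen applied to $t\mapsto\sqrt{1-t^2}$ where the paper sums over $A_u$ and applies Cauchy--Schwarz (same bound, essentially the same inequality), and you work with a fixed continuity neighborhood $[1-\varepsilon,1]^{r-1}$ where the paper takes $\varepsilon(\ell)=e^{-(\ell-1)\mu^2/4}\to 0$ and controls $\max_{x\in[1-\varepsilon,1]^{\partial_i u}}\Gamma(x)\le\Gamma(\mathbf 1)+\varepsilon\eta''$ by a gradient bound. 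The substantive difference is the concentration step, which you correctly flag as the delicate one but leave as a sketch. Your plan is a Chernoff bound on the i.i.d.\ log-posterior-odds increments $\log\big(g^+_{ju}/g^-_{ju}\big)$, which requires establishing a uniform positive lower bound on their mean (a KL-type quantity involving the random deeper magnetizations $X_{\partial_j u}$) and handling $Y_j$ separately; this can be made to work but is exactly where the effort would go. The paper sidesteps all of it with one trick: by Lemma~\ref{lem:loss} the MAP estimate of $s_j$ from $A_j$ is at least as accurate as majority voting on the one-hop answers $\{A_{ju'}\}$, which are genuinely i.i.d.\ $\pm1$ with bias $\mu$, so Hoeffding gives $\tfrac{1}{2}(1-\EXP^+[X_j])\le e^{-(|\partial j|-1)\mu^2/2}$, Markov's inequality converts this to $\Pr^+[X_j<1-\varepsilon]\le 2e^{-(|\partial j|-1)\mu^2/2}/\varepsilon$, and Lemma~\ref{lem:add} transfers the bound to $Y_j$. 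This yields a slightly weaker tail (polynomial in $1/\varepsilon$) than your claimed $e^{-\Omega(\ell)}$, but it suffices and avoids any uniformity issue over the deeper subtree. One point in your favor: you treat the case of leaf children ($X_j\equiv 0$) separately via the segment $\{c\mathbf 1:c\in[0,1]\}$, which the paper's concentration bound does not cover (it is vacuous when $\partial j=\emptyset$); however, your claim that $\max_{c\in[0,1]}\Gamma(c\mathbf 1)=\Gamma(\mathbf 0)=\sqrt{1-\mu^2}$ is asserted rather than proved and should be checked before relying on it.
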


Using the above lemma, we can find a sufficiently large constant $C_{\pi, r} \ge C'_{\pi, r} $ such that if $\ell-1 \ge
C_{\pi, r}$,
\begin{align*}
\eta \left(1-\varepsilon_{\pi, r} \right)^{{C_{\pi, r}}}
 ~\le~ \frac{1}{2C_{\pi, r}(r-1)}
  ~\le~ \frac{1}{2(\ell - 1)(r-1) }
\end{align*}
which implies \eqref{eq:final-wts} with \eqref{eq:exp_bdd} and completes the proof of
Lemma~\ref{lem:correlation-decay}.

\subsection{Proof of Lemma~\ref{lem:one}}
\label{sec:one-lem-pf}

We first obtain a bound on $X_j$ and $Y_j$ for  $j \in \partial_i u$.
The MAP estimator
$\hat{s}^*_j(A_j)$ of $s_j$ given $A_j$ is identical to estimating
$s_j = +1$ if $X_j$ is positive and $s_j = -1$ otherwise. From the definition of the MAP estimator,
it is straightforward to check
\begin{align*}
\Pr[s_j \neq \hat{s}^*_j(A_j)] = \frac{1-\EXP^+[X_j]}{2}.
\end{align*}
In addition, as Lemma~\ref{lem:loss} states, the
MAP estimator $\hat{s}^*_j(A_j)$ outperforms MV with $\{ A_{ju(jj')} :
j' \in \partial j\}$. Using Hoeffding's
bound, the error probability of MV is bounded as follows:
\begin{align*}
\frac{1-\EXP^+[X_j]}{2}
& ~\le~ \Pr\text{$^+$}[ s_j \neq \hat{s}^{\MF MV}_j]  \\
& ~\le~  \exp \left(- \frac{(|\partial j|-1)\mu^2}{2} \right)
\end{align*}
where Lemma~\ref{lem:loss} implies the first inequality.
Similarly,
$\hat{z}^*_j(A_j)$ of $s_j$ given $A_j$ and $\partial V_i$ is
identical to estimating $s_j = +1$ if $Y_j$ is positive and $s_j = -1$
otherwise. 
Using Lemma~\ref{lem:add} and 
 the Markov inequality, it follows that for small $\varepsilon > 0$, 
\begin{align} \label{eq:XY_bound}
\Pr\text{$^+$}[Y_j < 1-\varepsilon] & ~\le~  \Pr\text{$^+$}[X_j < 1-\varepsilon] \nonumber \\
&~\le~ \frac{2\exp\left(- \frac{(|\partial j|-1)\mu^2}{2} \right)}{\varepsilon}
\end{align}
where we use Lemma~\ref{lem:add} for the first inequality and the Markov inequality for the second one.

Since $0 < \EXP[\mu_u] $ and $\EXP[\mu_u^2] < 1$, we can find  finite constants
$\eta'$ and $\eta''$ such that for all $x \in [0,1]^{\partial_{i\!} u}$,
\begin{align*}
\left|  \Gamma(x) \right| \le \eta' \quad \text{and} \quad
\left| \frac{\partial \Gamma(x)}{\partial x_j} \right| \le \eta''.
\end{align*}
Let $\varepsilon(\ell) := \exp\left(- \frac{(\ell-1)\mu^2}{4} \right) \le \exp\left(- \frac{(|\partial j|-1)\mu^2}{4} \right).$
Then, we have
\begin{align*}
\EXP^+ \left[
 \max_{\lambda \in [0, 1]} 
\left\{
\Gamma(\lambda X_{\partial_{i\!}u}+(1-\lambda) Y_{\partial_{i\!}u})
\right\}
\right]
&\le
\left(1 - \Pr{\text{$^+$}}[X_{j} > 1-\varepsilon \text{~and~} Y_j > 1-\varepsilon ~\forall j \in \partial_{i}u] \right) 
 \times \max_{x \in [-1,1]^{\partial_{i\!} u}} \Gamma(x) \\
& + \Pr{\text{$^+$}}[X_{j} > 1-\varepsilon \text{~and~} Y_j > 1-\varepsilon ~\forall j \in \partial_{i}u]
 \times  \max_{x \in [1-\varepsilon,1]^{\partial_{i\!} u}}\Gamma(x) \\
&\underset{(a)}{\le}
\left( \sum_{j \in \partial_{i}u} \Pr{\text{$^+$}}[X_{j} \le 1-\varepsilon] + \Pr{\text{$^+$}}[ Y_j \le 1-\varepsilon] \right) 
 \times \max_{x \in [-1,1]^{\partial_{i\!} u}} \Gamma(x)
\\
&~~+ 1\times \max_{x \in [1-\varepsilon,1]^{\partial_{i\!} u}}\Gamma(x)  \\
&\underset{(b)}{\le}
4 r \eta' \varepsilon(\ell)
+ \max_{x \in [1-\varepsilon,1]^{\partial_{i\!} u}}\Gamma(x)  \\ 
&\underset{(c)}{\le}
4 r \eta' \varepsilon(\ell)
+ \Gamma(1) + \varepsilon(\ell) \eta''
\end{align*}
where we use the union bound, \eqref{eq:XY_bound}, and the mean value theorem for (a), (b), and (c), respectively.
Since $\varepsilon(\ell)$ decreases as $\ell$ increases, it is enough to show $\Gamma(1) \le \sqrt{1-\mu^2}$.
Using the Cauchy-Schwarz inequality, it follows that 
\begin{align*}
\Gamma(1_{\partial i\!} u) 
&= \sum_{A_{\partial_{i\!}u} } \!
\sqrt{\EXP_{\mu}\! \Big[ \tfrac{1+\mu_u}{2} \! \prod_{j \in \partial_{i\!} u}  \!\tfrac{1+A_{ju} \mu_u}{2}\Big] }
\! \cdot  \! \sqrt{
\EXP_{\mu}\! \Big[ \tfrac{1-\mu_u}{2} 
\! \prod_{j \in \partial_{i\!} u} \!
 \tfrac{1+A_{ju} \mu_u}{2}\Big]
} \\
&
+
\sum_{A_{\partial_{i\!}u} } \!
\sqrt{\EXP_{\mu} \!\Big[ \tfrac{1-\mu_u}{2}
\! \prod_{j \in \partial_{i\!} u}  \!
\tfrac{1+A_{ju} \mu_u}{2}\Big] }
\! \cdot  \!
\sqrt{
\EXP_{\mu} \! \Big[ \tfrac{1+\mu_u}{2}
\! \prod_{j \in \partial_{i\!} u}
\! \tfrac{1+A_{ju} \mu_u}{2}\Big]
}
\\
&
\le
\sqrt{\frac{1+\mu}{2}}
\cdot 
\sqrt{\frac{1-\mu}{2}}  + \sqrt{\frac{1-\mu}{2}}
\cdot 
\sqrt{
\frac{1+\mu}{2}} = \sqrt{1-\mu^2}.
\end{align*}
This completes the proof.

\section{Experimental Result}
\label{sec:exp}

In this section, we evaluate the performance of BP using both synthetic
datasets and real-world Amazon Mechanical Turk datasets to study how our
theoretical findings are demonstrated in practice.

\subsection{Tested Algorithms}
We compare BP and a variant of BP to two oracle algorithms and
several state-of-the-art algorithms in \cite{dawid1979, kos2011,
  liu2012}, each of which are briefly summarized next.

\vspace{0.05in}
\noindent{\bf A practical version of BP.} 
We note that BP, named {\MF BP-True} in our plots, requires the
knowledge of the prior on $p_u$'s.
However, in practice, the distribution is typically unknown.  Thus, we
design a practical version of BP, which we call {\MF EBP} (Estimation
and Belief Propagation) that has an additional procedure that extracts
the required statistics on the prior of $p_u$'s from the observed
data.  In {\MF EBP}, starting with a certain initialization of labels,
it first estimates the statistics of each worker's reliability
assuming the labels are true, and updates the labels via BP using the
estimated statistics as the reliability distribution, over multiple
rounds in an iterative manner.
We will focus on two versions of {\MF EBP} with one and two rounds,
respectively, marked as {\MF EBP(1)} and {\MF EBP(2)}, which is motivated by our
empirical observation that two rounds are enough to achieve good
performance, and the gain from more rounds is marginal.





\vspace{0.05in}
\noindent {\bf Oracle algorithm.} 
Since computing the MAP estimate is computationally intractable, we
instead compute the lower bound on the error rate, using the following
estimator with access to an oracle.  We consider an oracle MAP estimator
which has an omniscient access to a subset of the true labels of tasks
to label each task. We consider the {\MF Oracle-Task} that, to estimate task
$\rho$, uses the true labels of the only tasks separating the inside
and the outside of the breadth-first searching tree rooted from task
$\rho$ in $G$. Then due to the exactness of BP on a tree in
Property~\ref{prop:tree-bp} and Lemma~\ref{lem:add}, we can obtain the
lower bound in a polynomial time.

\vspace{0.05in}
\noindent {\bf Tested algorithms for comparisons.}
For comparison to the state-of-the-art algorithms, we test the majority
voting ({\MF MV}), an iterative algorithm ({\MF KOS}) \cite{kos2011}), the expectation
maximization ({\MF EM}) \cite{dawid1979})
and an approach based on approximate mean field ({\MF AMF})
\cite{liu2012}).  Specifically, as the authors in \cite{liu2012} suggested, we run
{\MF EM} and {\MF AMF} with $\text{Beta}(2, 1)$ as the input
distribution on workers' reliability.

We terminate all algorithms that run in an iterative manner
(i.e., all the algorithms except for {\MF MV}) at the maximum of $100$
iterations or with $10^{-5}$ message convergence tolerance, 
all results are averaged on $100$ random samples.




\begin{figure*}[t!]
   \begin{center}
\begin{minipage}{.32\textwidth}
   \begin{center}
     \subfigure[SH model with  $r = 5$\label{fig:SH_r5}]{\includegraphics[width=1\columnwidth]{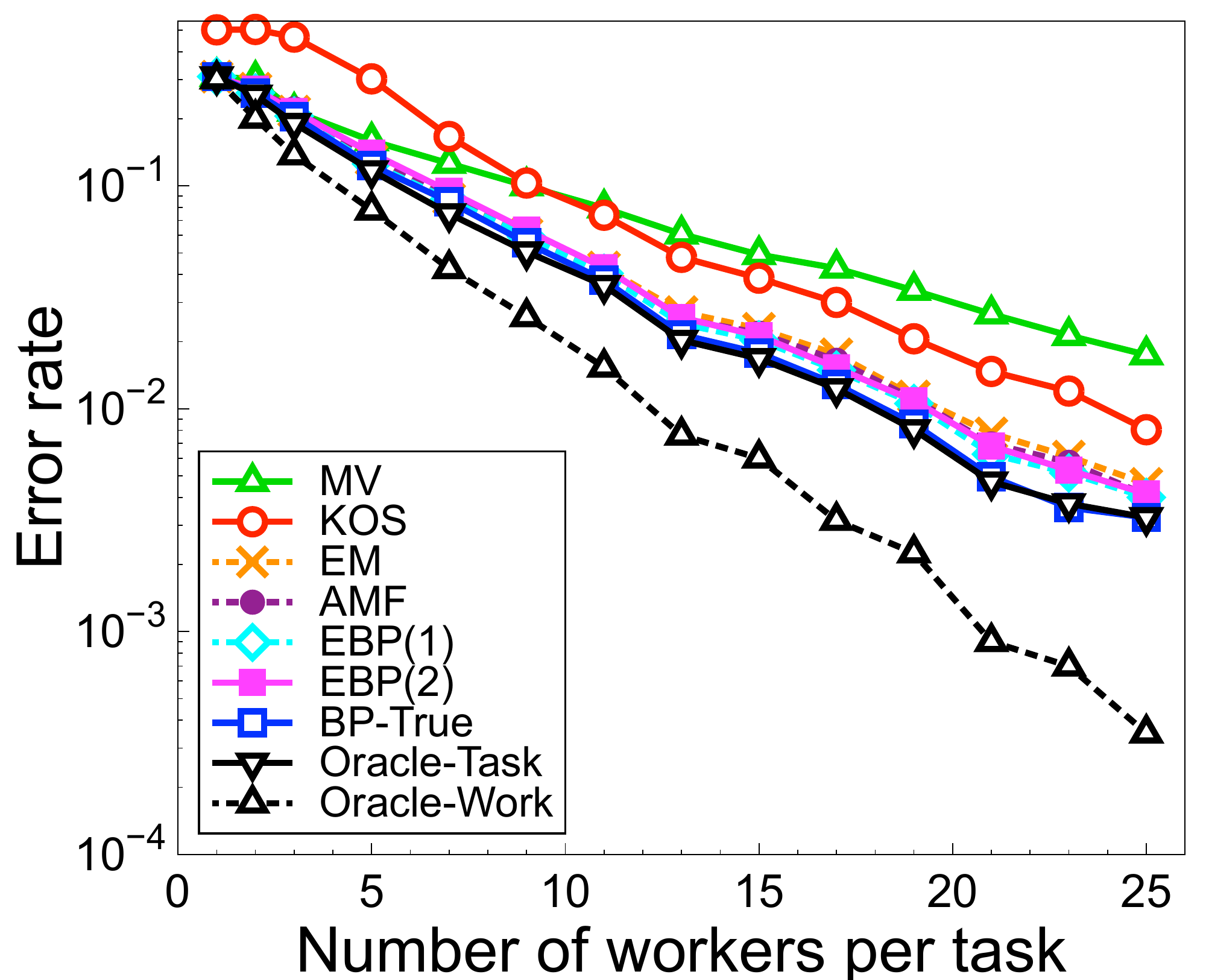}
     } \\
     \subfigure[SH model with  $\ell = 5$\label{fig:SH_l5}]{\includegraphics[width=1\columnwidth]{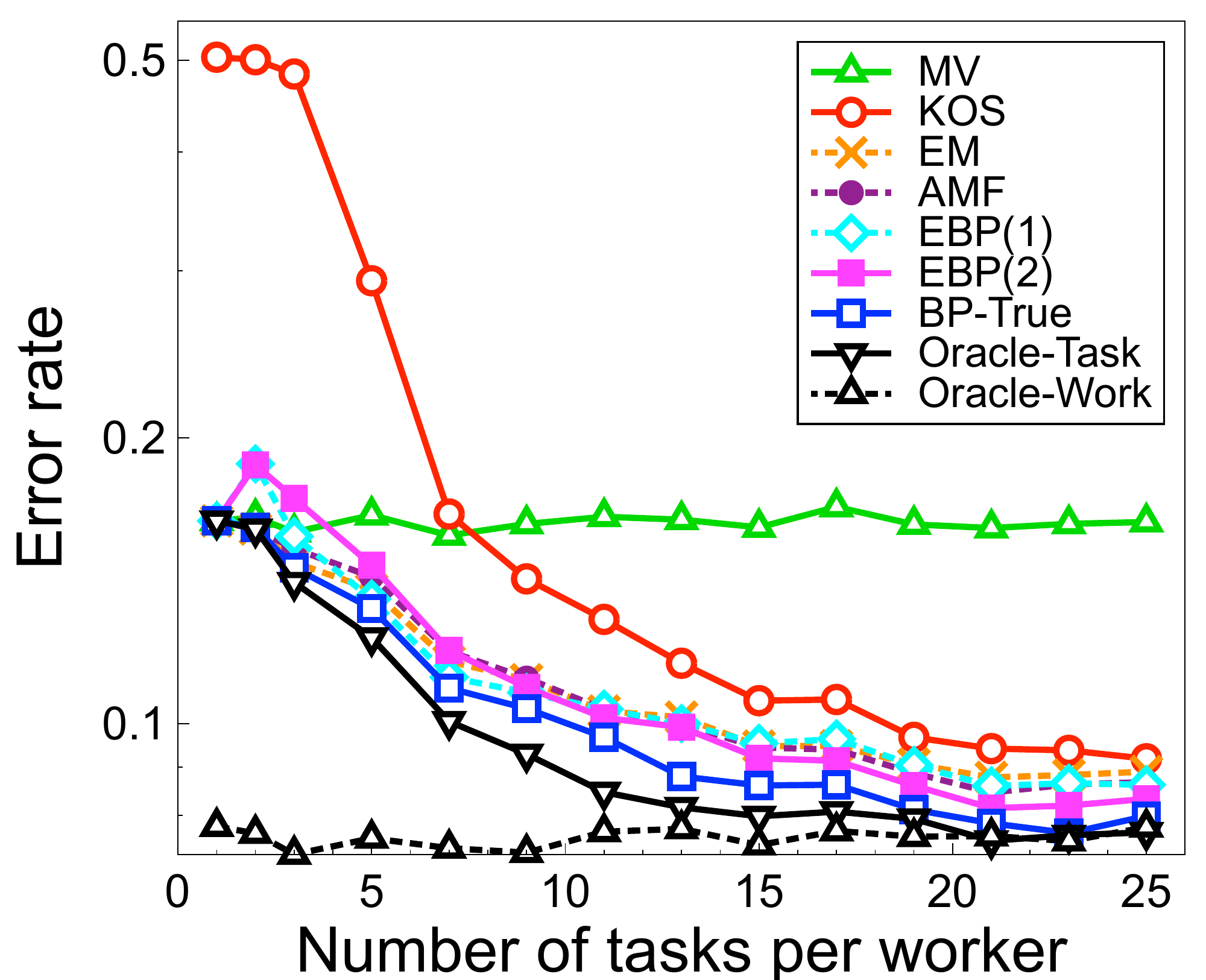}
     } 
   \end{center}
\end{minipage}
\begin{minipage}{.32\textwidth}
     \subfigure[ASH model with $r = 5$\label{fig:ASH_r5}]{\includegraphics[width=1\columnwidth]{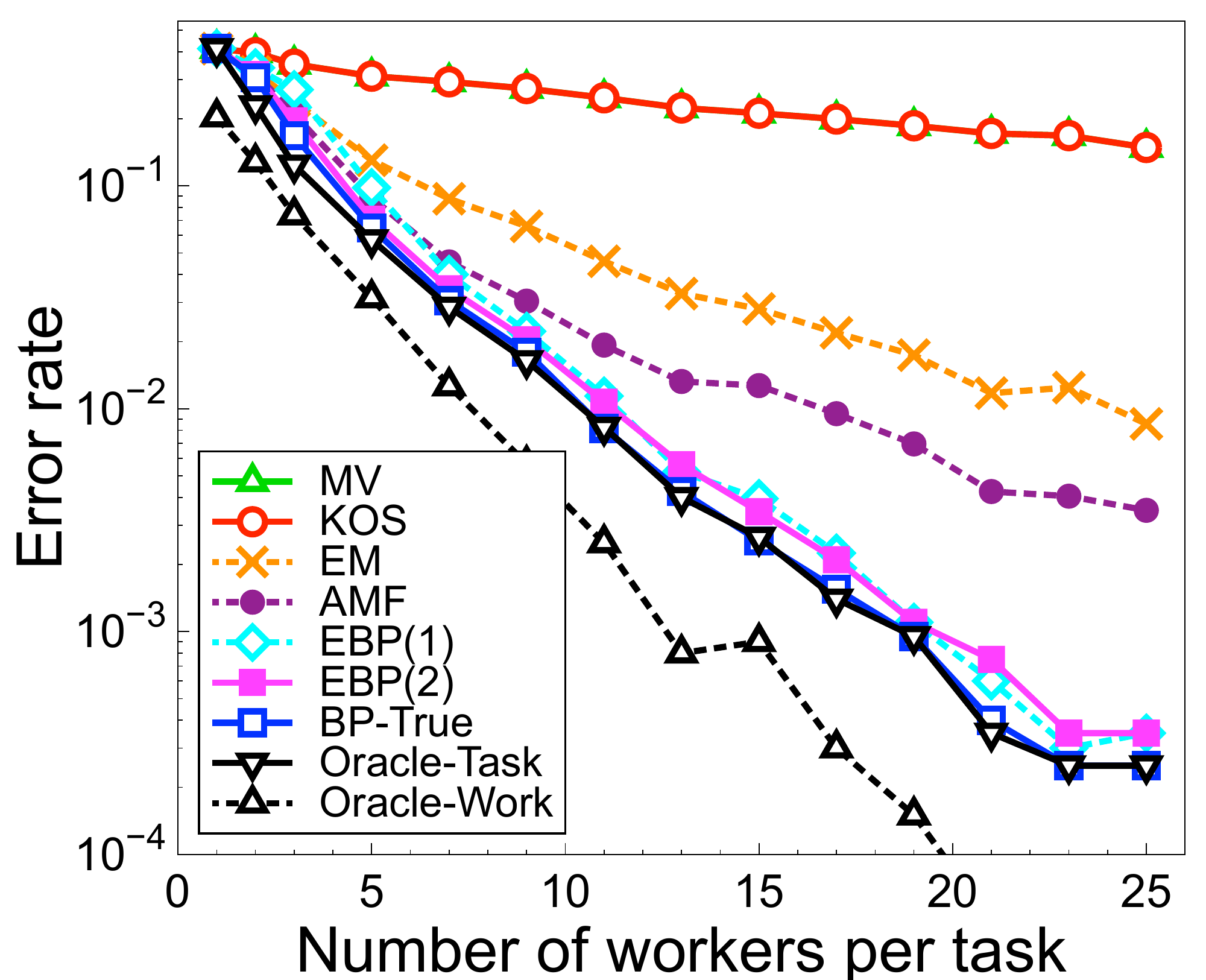}
     }
\subfigure[ASH model with $\ell = 5$\label{fig:ASH_l5}]{\includegraphics[width=1\columnwidth]{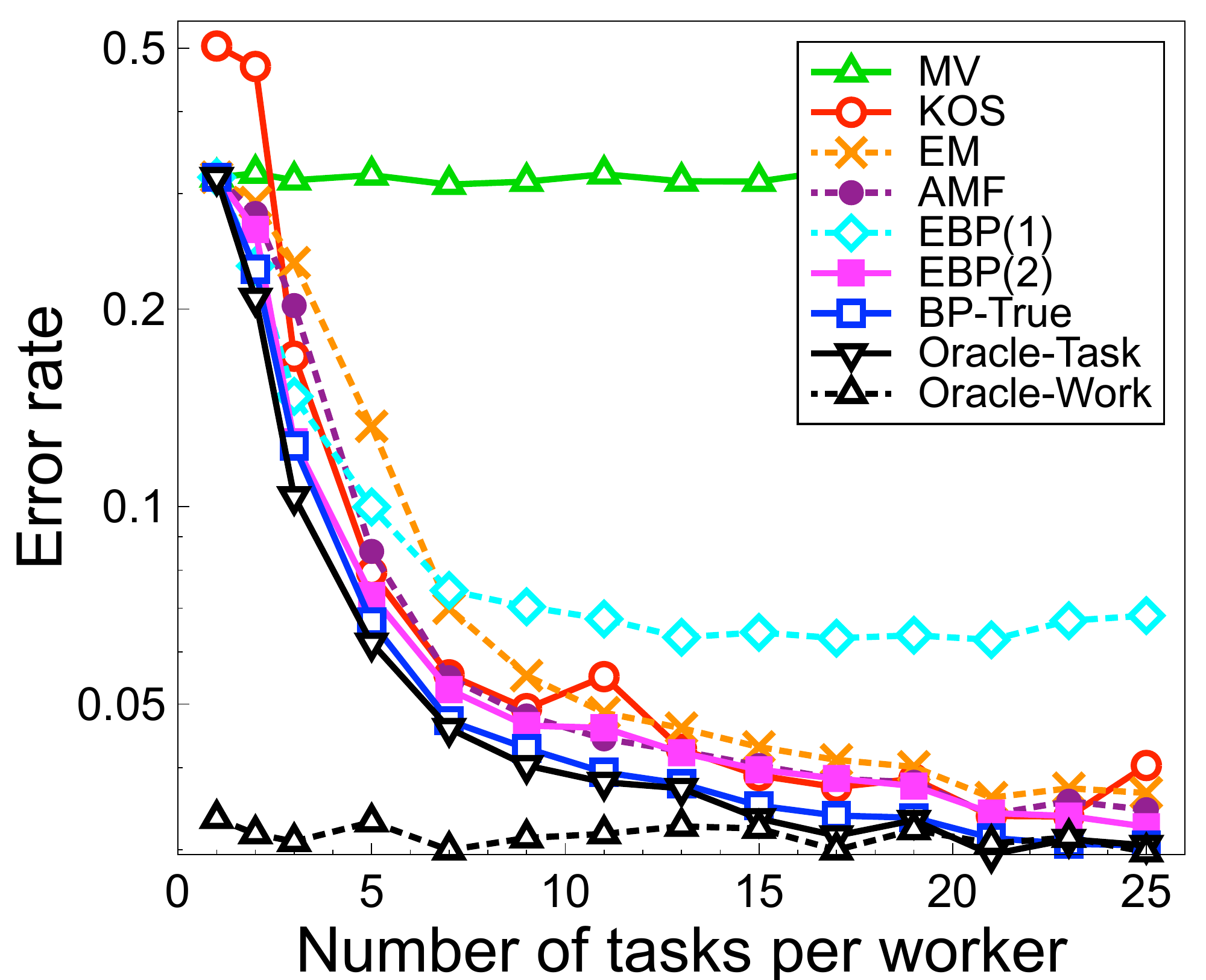}
     } 
\end{minipage}
\begin{minipage}{.32\textwidth}
        \subfigure[{\MF SIM} dataset\label{fig:KOS}]{\includegraphics[width=1\columnwidth]{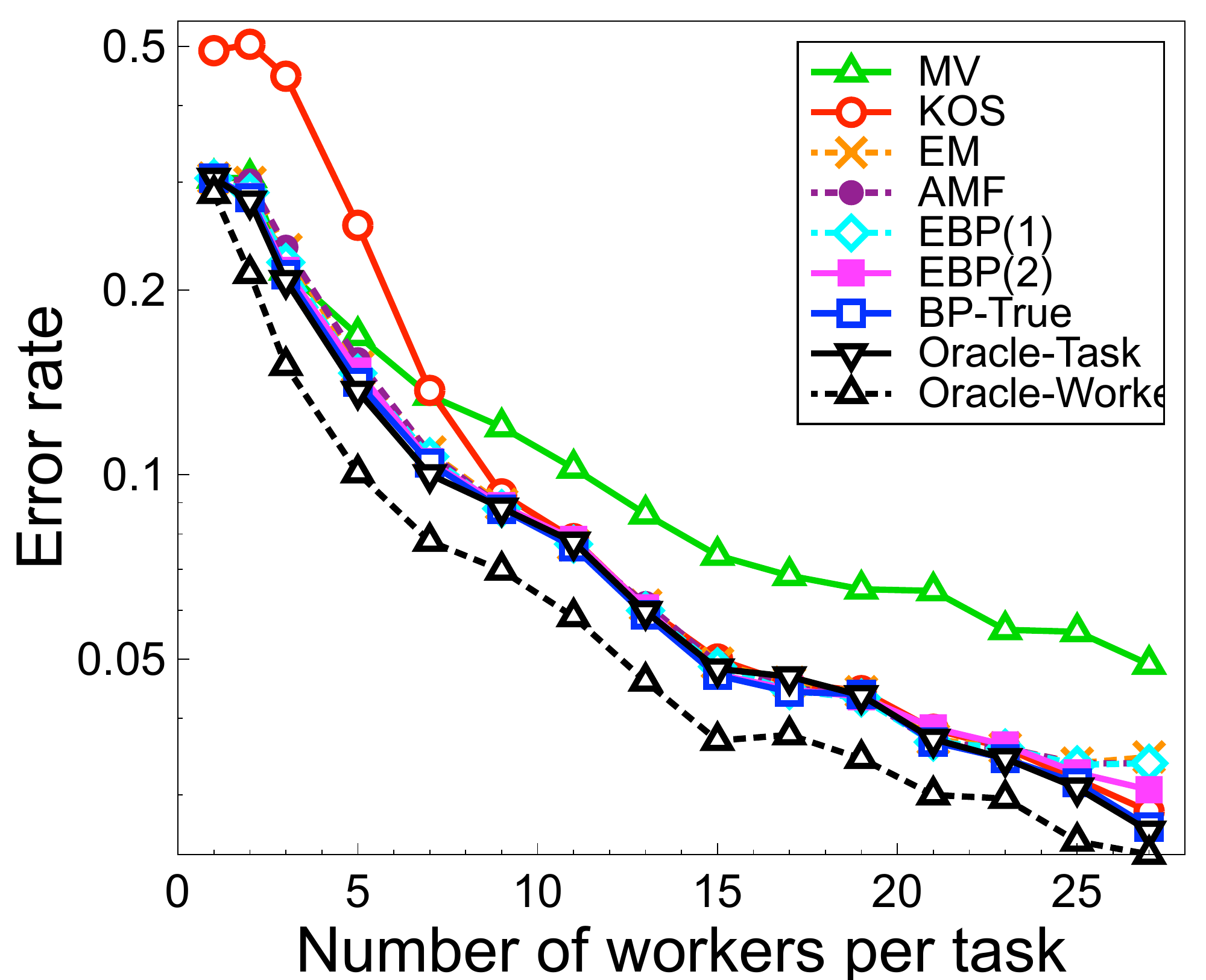}} \\
        \subfigure[{\MF TEMP} dataset\label{fig:TEMP}]{\includegraphics[width=1\columnwidth]{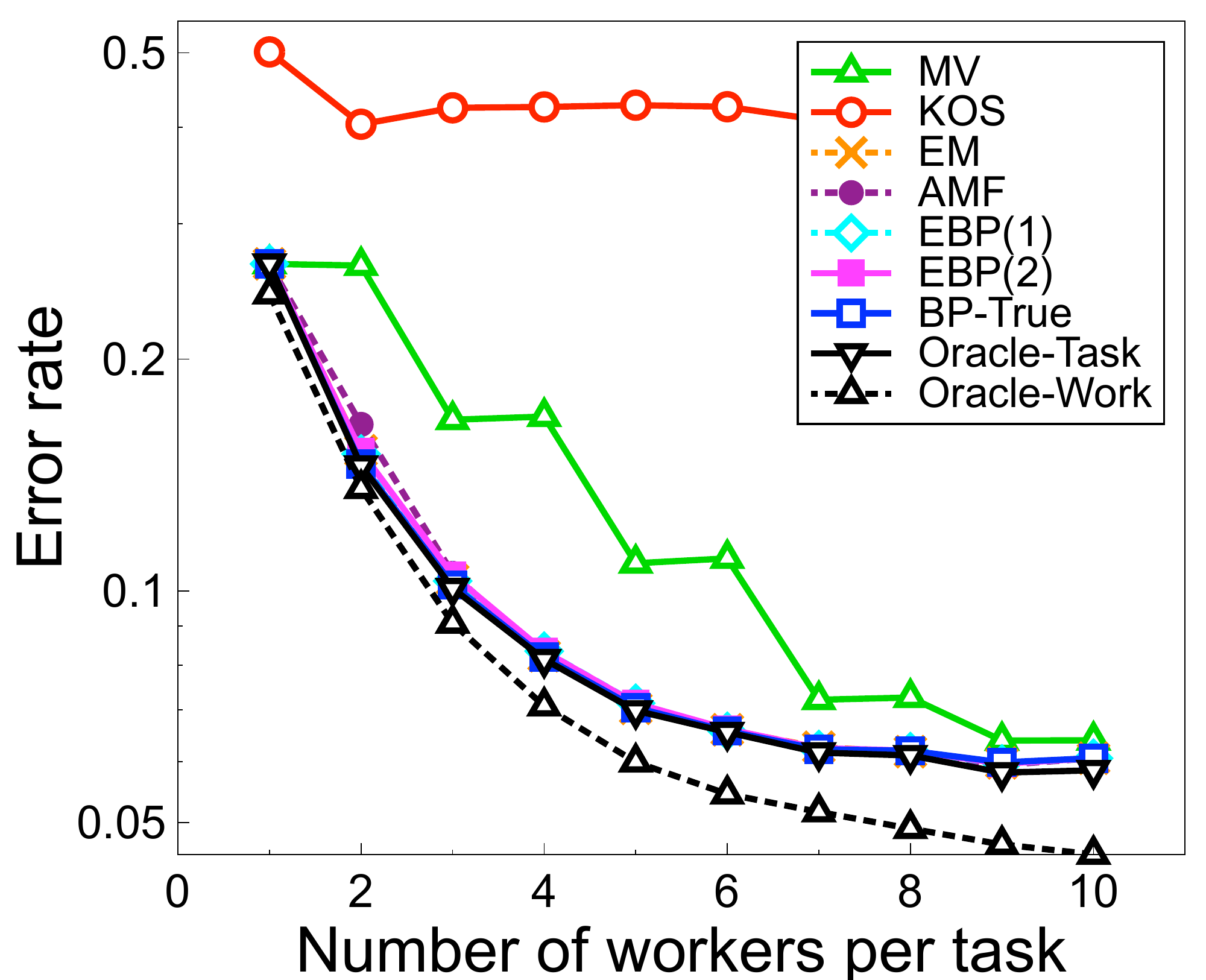}} 
\end{minipage}
   \end{center}
\vspace{-0.1in}
   \caption{The average fraction of incorrectly labeled tasks on the
     synthetic datasets and the real-world Amazon Mechanical Turk
     datasets; (a)-(b) the synthetic datasets consisting of $200$
     tasks with the spammer-hammer (SH) model with $\pi(0.5) =
     \pi(0.9) = 1/2$; (c)-(d) the synthetic datasets consisting of
     $200$ tasks with the adversary-spammer-hammer (ASH) model with
     $\pi(0.1) = \pi(0.5) = 1/4$ and $ \pi(0.9) = 1/2$; (a)
     Color-similarity comparison ({\MF SIM}) dataset with $50$ tasks
     and $28$ workers obtained in \cite{kos2014}; (b) Temporal
     ordering ({\MF TEMP}) dataset with $462$ tasks and $76$ workers
     obtained in \cite{snow2008}.}
\label{fig:all}
\vspace{-0.1in}
\end{figure*}

\subsection{Performance on Synthetic Datasets}
\label{sec:numerical-pf}

We first compare all the algorithms with synthetic datasets generated
by the set of random $(\ell, r)$-regular bipartite graphs having $200$
tasks from the configuration model \cite{bollobas1998}, where we vary
either $\ell$ or $r.$ We randomly choose worker's reliability $p_u$
from the {\it spammer-hammer} model with $\pi(0.5) = \pi(0.9) = 1/2$
and the {\it adversary-spammer-hammer} model with $\pi(0.1) = \pi(0.5)
= 1/2$ and $\pi(0.9) = 1/2$, whose results are plotted in
Figures~\ref{fig:SH_r5}-\ref{fig:SH_l5} and
Figures~\ref{fig:ASH_r5}-\ref{fig:ASH_l5}, respectively.

\vspace{0.05in}
\noindent {\bf Optimality of BP.} We observe that {\MF BP-True} with the knowledge
of the true reliability distribution has the negligible performance gap
from the lower bound of {\MF Oracle-Task}, whereas other algorithms have
the suboptimal performance and their suboptimality gap depends on
$\ell, r$ and the reliability distribution $\pi$ (see
Figures~\ref{fig:ASH_r5}). As discussed in \cite{kos2011}, we observe a
threshold behavior at $(\ell - 1) (r - 1) = 1/q^2$ where for small $\ell$
and $r$ {\MF MV} outperforms {\MF KOS} but for large $\ell$ and $r$ {\MF
  KOS} is better. However, {\MF  BP-true} consistently outperforms all
other algorithms irrespective of the values of  $\ell$ and $r$. 

\vspace{0.05in}
\noindent {\bf Near-optimality of EBP.} Even without knowing the true reliability
distribution, {\MF EBP} with two rounds ({\MF EBP(2)}), achieves almost
the same performance as {\MF BP-True}, as shown in Figure~\ref{fig:ASH_l5}. 
Note that {\MF MV} performs poorly since the number of workers per task is
small and the quality of workers, $\mu = \EXP[2p_u -1]$, is small.
Figure~\ref{fig:ASH_l5} shows that EBP with a single round leads to
moderate performance improvement, but one additional round in {\MF
  EBP(2)} provides us the performance close to optimality. 


\vspace{0.05in}
\noindent {\bf Tighter lower bound.}  We recall that a lower bound in
Lemma~\ref{lem:add} (i.e., {\MF Oracle-Task}) was tight enough to show the exact optimality
of BP, and this tightness is demonstrated in all Figures. 
Note that a different lower
bound is studied by \cite{kos2011} to show just an order-wise
optimality of {\MF KOS}, which is obtained by the Bayesian estimator
with full information on {\it true workers' reliabilities}, marked as
{\MF Oracle-Work} in our plots. 
Both {\MF Oracle-Work} and {\MF Oracle-Task} scale well with respect to
$\ell$ but only {\MF Oracle-Work} does with $r$ as well, thus being a
tighter lower bound (see Figures~\ref{fig:SH_l5}~and~\ref{fig:ASH_l5}).  

\subsection{Performance on Real Datasets}
We use two real-world Amazon Mechanical Turk datasets from
\cite{kos2011} and \cite{snow2008}: {\MF SIM} dataset and {\MF TEMP}
dataset. {\MF SIM} dataset is a set of collected labels where $50$ tasks
on color-similarity comparison are assigned to $28$ users in Amazon
Mechanical Turk. {\MF TEMP} dataset consists of $76$ workers' labels on
$462$ questions about temporal ordering of two events in a collection of
sentences of a natural language. In both datasets, we use the
reliability measured from the dataset as a true workers'
reliability, and we vary $\ell$ by
subsampling the datasets. 
Figures~\ref{fig:KOS} and \ref{fig:TEMP} shows the evaluation results,
where we obtain similar implications to those with the
synthetic datasets, where {\MF EBP(2)} is close to {\MF Oracle-Task} and
outperforms all other the state-of-the-art algorithms. In particular,
{\MF KOS} performs poorly for the {\MF TEMP} dataset, because it is
under the regime for small $\ell$, i.e., before the threshold. 

\section{Conclusion and Discussion}
\label{sec:conclusion}

In this paper, we settle the question of optimality and computational
gap for a canonical scenario for the crowdsourced classification where
the tasks are binary.
Here we discuss some interesting potential extensions of our result.
First the BP optimality can be proved when the task assignment graph is {\it irregular}. 
Our proof of the BP optimality uses the locally tree-like structure in \eqref{eq:tree-probability} 
and the decaying correlation in Lemma~\ref{lem:correlation-decay}. 
These properties hold as long as 
the numbers of workers per task are finite. 
One can potentially generalize 
Theorem \ref{thm:optimality} to irregular bipartite graphs, 
where each task is assigned to sufficiently large but {\it different} number of workers and each worker is assigned to 
large but different number of tasks. 
This extension is important in practical setting where the workers decide how many tasks to work on.

Second it would be interesting to tighten the constants in
the error exponent in \eqref{eq:errorexponent} since the actual
performance of BP is better than predicted by this upper bound.
The
analysis could be significantly tightened, if one can provide tighter
analysis of both the majority voting and the KOS algorithm.
  Next, a
tighter analysis of the oracle error rate is needed. We provide an
oracle estimator that is significantly tighter than the naive oracle
estimators presented in \cite{kos2011}. This strong oracle can be
numerically evaluated, as we do in our experiments.  However, it is not
known how the error achieved by this oracle estimator scales with
problem parameters.  A tight analysis of this lower bound in a form
similar to \eqref{eq:errorexponent} would complete the investigation of
optimality of BP.  Finally, it has been observed in
\cite{kos2011,KOS13SIGMETRICS} that there exists a spectral barrier at
$(\ell-1)(r-1)=1/q^2$, where $q=\EXP[(2p_u-1)^2]$.  Below the spectral
barrier, we observe that the gap between the simple majority voting and
BP becomes narrower as we step away from this threshold.  It is of
interest to identify where MV is optimal, in order to provide guidelines
on how to design crowdsourcing experiments and which algorithms to use.

When we have more than two classes, our algorithm naturally generalizes.
However, the computational complexity increases and the analysis
techniques do not generalize.  We need to investigate other inference
algorithms, perhaps those based on semidefinite programming or
expectation maximization, and provide an analysis that naturally
generalizes to multiple classes. When there are $k$ classes,
characterizing the error rate when $k$ scales as $n^\alpha$ for some
parameter $\alpha$ is of interest. We expect BP to be no longer optimal
for some regimes of $\alpha$.

\updated{
One of the major drawback of the Dawid-Skene model is that it does not
account for tasks that have different difficulty levels.  In real-world
crowdsourcing data, it is common to see some tasks that are more
difficult than the others.  To capture such heterogeneity, several
generalized models have been proposed
\cite{RYZ10,whitehill09,Welinder10,snow2008,sheng2008,ZPBM12,ZLPCS15, KhetanOh2016}.
}
For these general models, the questions of the error rate achieved by
efficient inference algorithms is widely open.  Finally, in real
crowdsourcing systems, adaptive design is common. One can decide to
collect more data on those tasks that are more difficult.  Tighter
analysis of the error rate can provide guidelines on how to design such
adaptive crowdsourcing experiments.
Understanding such adaptive task assignments is an important topic, as they are widely used in practice. 
Under the standard Dawid-Skene model studied in this paper, it is known that there is not much gain in 
using adaptive schemes \cite{kos2014}. 
The main reason is that all tasks are inherently assumed to be equally easy (or difficult) and 
there is not much gain in identifying tasks with less confidence and assigning more workers on those tasks. 
However, recent advances work in  \cite{KhetanOh2016} proves that 
under a more general variation of the Dawid-Skene model, 
it is possible to significantly outperform non-adaptive schemes (such as those studied in this paper), by using adaptive task assignment schemes. 
Understanding the optimality of BP under this more generalized Dawid-Skene model is an interesting open problem. 
It is not even clear how to run BP in this case, as both tasks and workers are parametrized by continuous variables. 

Finally, we note that a preliminary version of this work has been published as \cite{ok2016icml}, 
where the authors showed the BP optimality when $r = 2$. 
In this work, we provide a generalized proof of the BP
optimality with all $r \ge 1$.


 \section*{Acknowledgment}
 
This work is supported by NSF SaTC award CNS-1527754, and NSF CISE award CCF-1553452.

\bibliography{ref}
\bibliographystyle{IEEEtran}

\end{document}